%% file: main_new.tex
\documentclass{article}
\usepackage[top=1in, bottom=1in, left=1in, right=1in]{geometry}
\usepackage[utf8]{inputenc}

\input{def.tex}

\input{arxiv/math_commands}

\usepackage{color}
\usepackage{enumitem}
\usepackage{subcaption}
\usepackage{hyperref}
\usepackage{url}
\usepackage{overpic}
\usepackage{booktabs}
\usepackage{tabularx}
\usepackage{thm-restate}
\usepackage{xcolor}
\newcounter{suppfigure}
\usepackage[draft]{changes}   % use [draft] to show changes, [final] to accept them

\title{UQ-SHRED: uncertainty quantification of shallow recurrent decoder networks for sparse sensing via engression}

\author{
Mars Liyao Gao$^{1}\thanks{These authors contributed equally.}$,
Yuxuan Bao$^{2}\footnotemark[1]$,
Amy S. Rude$^{2}$,
Xinwei Shen$^{3}$,
J. Nathan Kutz$^{1,4}\thanks{Corresponding authors: marsgao@uw.edu, kutz@uw.edu.}$
\\[0.1in]
$^{1}$Paul G. Allen School of Computer Science \& Engineering \\
$^{2}$Department of Applied Mathematics \\
$^{3}$Department of Statistics \\
University of Washington, Seattle, WA, USA\\[.1in]
$^{4}$ Autodesk Research, London UK
\\[0.1in]
\texttt{\{marsgao,baoyx,amysrude,xwshen,kutz\}@uw.edu}
}

\date{\today}

\begin{document}

\maketitle

\begin{abstract}
Reconstructing high-dimensional spatiotemporal fields from sparse sensor measurements is critical in a wide range of scientific applications. The
SHallow REcurrent Decoder (SHRED) architecture is a recent state-of-the-art architecture that reconstructs high-quality spatial domain from hyper-sparse sensor measurement streams. 
An important limitation of SHRED is that in complex, data-scarce, high-frequency, or stochastic systems, portions of the spatiotemporal field must be modeled with valid uncertainty estimation. We introduce \textbf{UQ-SHRED}, a distributional learning framework for sparse sensing problems that provides uncertainty quantification through a neural network-based distributional regression called engression. UQ-SHRED models the uncertainty by learning the predictive distribution of the spatial state conditioned on the sensor history. By injecting stochastic noise into sensor inputs and training with an energy score loss, UQ-SHRED produces predictive distributions with minimal computational overhead, requiring only noise injection at the input and resampling through a single architecture without retraining or additional network structures. On complicated synthetic and real-life datasets including turbulent flow, atmospheric dynamics, neuroscience and astrophysics, UQ-SHRED provides a distributional approximation with well-calibrated confidence intervals. 
We further conduct ablation studies to understand how each model setting affects the quality of the UQ-SHRED performance, and its validity on uncertainty quantification over a set of different experimental setups.
\end{abstract}

\section{Introduction}
\label{sec:intro}

Reconstructing high-dimensional spatiotemporal fields from limited and sparse sensor measurements is a fundamental problem across the physical, biological and engineering sciences.
In each of these settings, the reconstruction problem is highly underdetermined, as the spatial dimensionality vastly exceeds the number of available sensor measurements.  
% \textcolor{purple}{maybe replace "that" with ",as" in previous sentence (the reconstruction problem is highly underdetermined, as the spatial...)} \textcolor{red}{Cool.}
Data-driven approaches to sparse sensing have advanced considerably in recent years~\cite{manohar2018data}, with deep learning architectures demonstrating strong empirical performance on high-dimensional reconstruction tasks~\cite{erichson2025flex,erichson2020shallow,yu2019flowfield}. 
The SHallow REcurrent Decoder (SHRED) network~\cite{williams2023sensing} takes a distinct approach by leveraging the temporal history of a hyper-sparse number of sensor measurements into a reduced-order latent space, which is grounded in Takens' delay embedding theorem~\cite{takens2006detecting}. 
SHRED thus enables sparse spatial sensing from as few as three sensors placed anywhere in the dynamics. This hyper-sparse sampling and sensor agnostic placement strategy have been demonstrated on a number of scientific applications such as fluid dynamics~\cite{williams2024sensing}, seismology~\cite{ni2024wavefield}, neuroscience~\cite{rude2025shallow}, plasma physics~\cite{kutz2024shallow}, and remote sensing~\cite{zhang2026sendai}. The SHRED architecture is flexible and can also be modified for model discovery~\cite{gao2025sparse} and data assimilation~\cite{bao2025data}. 
% \textcolor{purple}{instead of "so on" in the previous sentence, maybe say "which has been applied to many scientific disciplines such as model discovery..."} Mars: good point
Challenges remain however, especially as point estimations are fundamentally limited for downstream tasks such as risk assessment, anomaly detection, and decision-making under uncertainty. 
%Understanding the confidence interval of the prediction further helps us to obtain the confidence and uncertainty of the model prediction given available data. 
Therefore, it is essential to extend the SHRED architecture to quantify the uncertainty with validity in the sensor-to-state mapping, enabling uncertainty-aware analysis for reliable and safety-critical downstream scientific applications. Despite its importance, obtaining reliable uncertainty estimates in this setting is highly nontrivial. 
The difficulties come from the fact that (i) the use of blackbox learning architectures introduces nonlinear mappings, (ii) recurrent units accumulate and amplify uncertainty over time, and (iii) many existing approaches to distributional learning require substantial data and computational resources (e.g. ensembling) while often lacking formal statistical guarantees on the resulting predictive distribution.

In this paper, we introduce UQ-SHRED, a distributional learning framework for sparse sensing that enables valid uncertainty quantification by learning the full conditional distribution of the spatial states given observed sensor measurements. It is achieved by concatenating Gaussian noise vectors to the input and training the model using an energy score loss to directly optimize a proper scoring rule for distributional predictions inspired by Shen and Meinshausen~\cite{shen2023engression}.
Since the noise is injected at the input, it propagates through the temporal unit and shallow decoder, allowing uncertainty to be modeled throughout the full mapping from sparse sensors to spatial states without architectural modification. At inference time, UQ-SHRED generates samples from the conditional predictive distribution by drawing samples of the input noise distribution and propagating these samples through the network. The resulting empirical distribution enables estimation of statistical quantities of interests, including the conditional mean, variance, and prediction intervals. Crucially, UQ-SHRED requires only a single trained network which avoids the computational cost of training multiple separate units for distributional learning. The method further admits theoretical guarantees on the derived conditional distribution and the quantile estimates.

We evaluate our method on five complex real-world datasets including sea-surface temperature data from NOAA~\cite{reynolds2002improved}, isotropic turbulent flow from JHUDB~\cite{li2008public}, neural data from the Allen Institute~\cite{siegle2021survey}, solar activity from NASA Solar Dynamics Observatory~\cite{pesnell2012solar}, and propulsion physics~\cite{bao2026cheap2rich}. 
These datasets provide a broad testbed for uncertainty quantification in sparse recovery, spanning different spatial structures, sensing regimes, and system dynamics. We validate the UQ-SHRED architecture and its robustness across these scientific applications.
%over a diverse range of physical systems. 
Furthermore, we present an ablation study on the SST dataset to investigate the sensitivity of calibration quality to temporal lag, sensor count, noise dimension, Monte Carlo sample size, and training duration. 
The ablation study further illustrates the behavior of the uncertainty estimates produced by UQ-SHRED under these changing, potentially non-ideal hyperparameter settings. 
To summarize, our UQ-SHRED contributions are as follows:

\begin{enumerate}
    \item We develop a distributional learning framework for sparse sensing that provides valid uncertainty quantification. Inspired by engression, UQ-SHRED combines noise injection and energy score minimization for learning the predictive distribution across the temporal unit and shallow decoder of the SHRED architecture, while preserving the computational efficiency of a single-network approach.
    \item We establish the theoretical validity of UQ-SHRED under regularity conditions, showing that the learned conditional distribution supports valid estimation of common statistical quantities of interest. This distinguishes UQ-SHRED from purely heuristic uncertainty estimates and provides theoretical support for its use in uncertainty-aware sparse recovery.
    \item We verify the effectiveness of UQ-SHRED on a wide range of scientific problems including sea-surface temperature, turbulent flows, neural activity, solar activity, and propulsion physics. This demonstrates that UQ-SHRED remains effective across diverse applications with remarkably different physics.
\end{enumerate}

\noindent
The manuscript is organized as follows:  In Sec.~\ref{sec:prelim}, we detail the two critical mathematical structures of UQ-SHRED and highlight some of the current methods being used for UQ in deep learning.  This is followed in Sec.~\ref{sec:method} with the development of the UQ-SHRED architecture along with theoretical estimates related to the methodology.  Section~\ref{sec:experiments} deploys the UQ-SHRED architecture across a diversity of real-world data sets, explicitly demonstrating the quality of reconstructions and the accuracy of the theoretical estimates derived.  An outlook and review of the findings is given in Sec.~\ref{sec:conclusion}.

% From the statistical perspective, the quality of a probabilistic forecast is formalized through the notion of proper scoring rules \textbf{refs}. A scoring rule is \emph{strictly proper} if it is uniquely minimized in expectation when the predictive distribution equals the true data-generating distribution, providing a training objective that directly incentivizes distributional accuracy. The continuous ranked probability score (CRPS) and its multivariate generalization, the energy score \textbf{refs}, are widely used ... \textbf{To add}. Engression \textbf{ref} leverages the energy score as a training objective within a pre-additive noise model: a noise vector is provided as an additional input to the network, and the energy score loss drives the resulting stochastic mapping to approximate the true conditional distribution. Unlike Bayesian or ensemble methods, engression requires only a single network, imposes no parametric assumptions on the output distribution, and is trained with a strictly proper scoring rule whose population optimum coincides with the true conditional distribution.

% \textbf{Do we still want to include a paragraph explicitly listing the main contributions? something like - The main contributions of this paper are as follows... I'm not sure as they are majorly conveyed in the above paragraph}
% \textcolor{blue}{Mars: yes it would just work like a regular summary of stuff}

\section{Preliminaries}
\label{sec:prelim}

Before presenting our UQ-SHRED architecture, the two mathematical architectures required are detailed.  Moreover, we also give a brief review of other existing methods for extracting UQ metrics using deep learning.

\subsection{The SHRED Architecture}
\label{sec:shred_background}

As noted earlier, reconstructing high-dimensional spatiotemporal states from sparse sensor measurements is a core problem in scientific computing and engineering~\cite{kutz2025accelerating}. The SHRED architecture~\cite{williams2024sensing,ye2025pyshred} addresses this by exploiting the temporal structure at sensor locations to infer the full spatial field. The approach draws on Takens' delay embedding theorem~\cite{takens2006detecting}, which guarantees that, under generic conditions, a sufficiently long time-history of a scalar observable provides a diffeomorphic representation of the underlying dynamical attractor. Concretely as it relates to SHRED, given $p$ sensors at fixed spatial locations, a sliding window of $L$ consecutive measurements $\mathbf{S}_{t-L+1:t} \in \mathbb{R}^{L \times p}$ is fed into a multi-layer LSTM~\cite{hochreiter1997long} that produces a latent code $\mathbf{z}_t \in \mathbb{R}^{d_z}$ from its final hidden state. A shallow decoder then maps $\mathbf{z}_t$ to the full spatial field $\hat{\mathbf{u}}_t \in \mathbb{R}^n$ with $n \gg p$.

Two properties of this architecture are particularly relevant for uncertainty quantification. First, the sensor-to-state mapping is inherently underdetermined: $p$ sensor values do not uniquely specify $n$ spatial degrees of freedom, and the LSTM-based structure imposes a finite-dimensional information bottleneck through $d_z$. Any deterministic SHRED model selects a single point estimate from the set of states consistent with the observed sensors, discarding information about the remaining ambiguity. Second, the temporal unit processes the lag window sequentially, meaning its hidden state serves as a compressed summary of the recent dynamical history. Injecting controlled stochasticity into this process---as we propose in Section~\ref{sec:noise_injection}---transforms the deterministic point estimate into a distribution over plausible reconstructions, with the spread reflecting the sensor-to-state ambiguity at each spatiotemporal location.

Several extensions of SHRED have been developed for specific modeling challenges, including latent-space dynamics identification for general purpose scientific discovery~\cite{gao2025sparse}, reduced-order modeling~\cite{tomasetto2025reduced}, and transformer-based temporal units~\cite{yermakov2025t}. DA-SHRED~\cite{bao2025data} addresses the simulation-to-reality gap by updating the latent representations of SHRED using real sensor observations, enabling data assimilation without access to full-field ground-truth data. In this work, we effectively regularize SHRED with the engression framework for reconstruction tasks from sparse observation, though the approach could be compatible with any variant that admits noise injection at the input layer, including multiscale extensions for settings with heterogeneous fields.

\subsection{Engression and the Energy Score}
\label{sec:engression_background}
Engression~\cite{shen2023engression} is a neural network-based distributional learning method that aims to learn the full conditional law $\mathcal{L}(Y\mid X=x)$ rather than only the conditional mean $\E{Y\mid X=x}$. 
Engression models the conditional distribution through a general conditional generative model of the form
\begin{align}
    Y=g_\theta(X,\boldsymbol{\varepsilon}),
\end{align}
where $\boldsymbol{\varepsilon}\sim P_{\boldsymbol{\varepsilon}}$ and $\boldsymbol{\varepsilon}\perp X$. Engression fits a conditional sampler $g_\theta(x,\boldsymbol{\varepsilon})$ which induces a predictive distribution $P_\theta(\cdot\mid x)$. 
% Engression fits a conditional sampler $g_\theta(x,\boldsymbol{\varepsilon})$ with $\boldsymbol{\varepsilon}\sim P_{\boldsymbol{\varepsilon}}$ and $\boldsymbol{\varepsilon}\perp X$, which induces a predictive distribution $P_\theta(\cdot\mid x)$. \xinwei{It's better to say that engression uses the general form of generative models $g(x,\varepsilon)$ which allows the noise to interact arbitrarily with the predictors, including commonly used post- or pre-additive noise models as special cases. Mention pre-anm only in the context of extrapolation. because we dont wanna restrict ourselves to pre-anms only for the general purpose of learning distributions.}

For a distribution $P$ on $\R^r$ and observation $z\in\R^r$, let $Z,Z'\overset{i.i.d.}{\sim}P$ be two independent random vectors drawn from $P$. The proper scoring rule used in engression is the Energy Score
\begin{align}
    ES(P,z)=\frac{1}{2}\Ep{P}{\|Z-Z'\|_2}-\Ep{P}{\|Z-z\|_2},
    \qquad Z,Z'\overset{i.i.d.}{\sim}P,
\end{align}
which is a multivariate extension of the continuous ranked probability score (CRPS), and is strictly proper under mild conditions~\cite{szekely2003statistics,szekely2023energy}. 
Plugging in $P=P_\theta(\cdot\mid X)$ yields the population objective: 
\begin{align}
    \label{eqn:eng_energy_score}
    \min_{\theta}\ \E{\|Y-g_\theta(X,\boldsymbol{\varepsilon})\|_2-\frac{1}{2}\|g_\theta(X,\boldsymbol{\varepsilon})-g_\theta(X,\boldsymbol{\varepsilon}')\|_2 },
    \qquad \boldsymbol{\varepsilon},\boldsymbol{\varepsilon}' \overset{i.i.d.}{\sim} P_{\boldsymbol{\varepsilon}},
\end{align}
which enables likelihood-free distribution learning for multivariate $Y$ using only samples from $g_\theta$ by drawing Monte Carlo samples from $g_\theta(x, \boldsymbol{\varepsilon}_j)$ with $\boldsymbol{\varepsilon}_j\overset{i.i.d.}{\sim}P_{\boldsymbol{\varepsilon}}$. 

\subsection{Uncertainty quantification methods in deep learning}

Uncertainty quantification (UQ) for deep learning models is an active area of research with a broad and diverse range of methods for extracting UQ~\cite{he2026survey,wu2021quantifying}. Bayesian neural networks~\cite{zhu2018bayesian,neal2012bayesian,blundell2015weight,tonekaboni2025hdp,mars2024bayesian} place prior distributions over network weights and approximate the posterior via variational inference or sampling, providing a principled but often computationally demanding framework. MC-dropout~\cite{gal2016dropout} reinterprets dropout at test time as approximate Bayesian inference, offering a practical alternative that requires no architectural changes but lacks formal consistency guarantees for the resulting predictive distribution. Deep ensembles~\cite{lakshminarayanan2017simple,guo2017calibration} train multiple independent networks with different random initializations and aggregate their predictions; while empirically effective, this approach multiplies the training and inference cost by the ensemble size and does not optimize a proper scoring rule. Heteroscedastic regression models~\cite{kendall2017uncertainties} parameterize the output variance as a function of the input, but typically assume a fixed distributional form (e.g., Gaussian) that may be misspecified for complex spatiotemporal systems. More recently, generative approaches including conditional normalizing flows~\cite{papamakarios2021normalizing,trippe2018conditional} and diffusion models~\cite{ho2020denoising,song2020score} can represent flexible conditional distributions without parametric assumptions on the output, but normalizing flows require invertible network architectures that can limit expressiveness in higher-dimensional output spaces, and diffusion models incur significant computational cost from the iterative denoising procedure required during both training and inference.

\section{Method: UQ-SHRED}
\label{sec:method}

The deterministic SHRED model (introduced in Section~\ref{sec:shred_background}) returns a single point estimate $\hat{\vy}_t$ for each sensor window $\vx_t$. However, in regimes characterized by stochastic dynamics, high-frequency content, or insufficient data coverage, portions of the spatiotemporal field cannot be reconstructed with high confidence. In such settings, the quantity of interest is not just a best-guess reconstruction but the full conditional distribution $P(\vy_t \mid \vx_t)$, from which one can derive prediction intervals, assess risk, and propagate uncertainty into downstream analyses. This section develops UQ-SHRED, which leverages the engression framework~\ref{sec:engression_background} to learn the predictive distribution of the spatial state conditioned on the sparse sensor history.

\subsection{Architecture}
\label{sec:noise_injection}

\begin{figure}[t]
\centering
\includegraphics[width=\columnwidth]{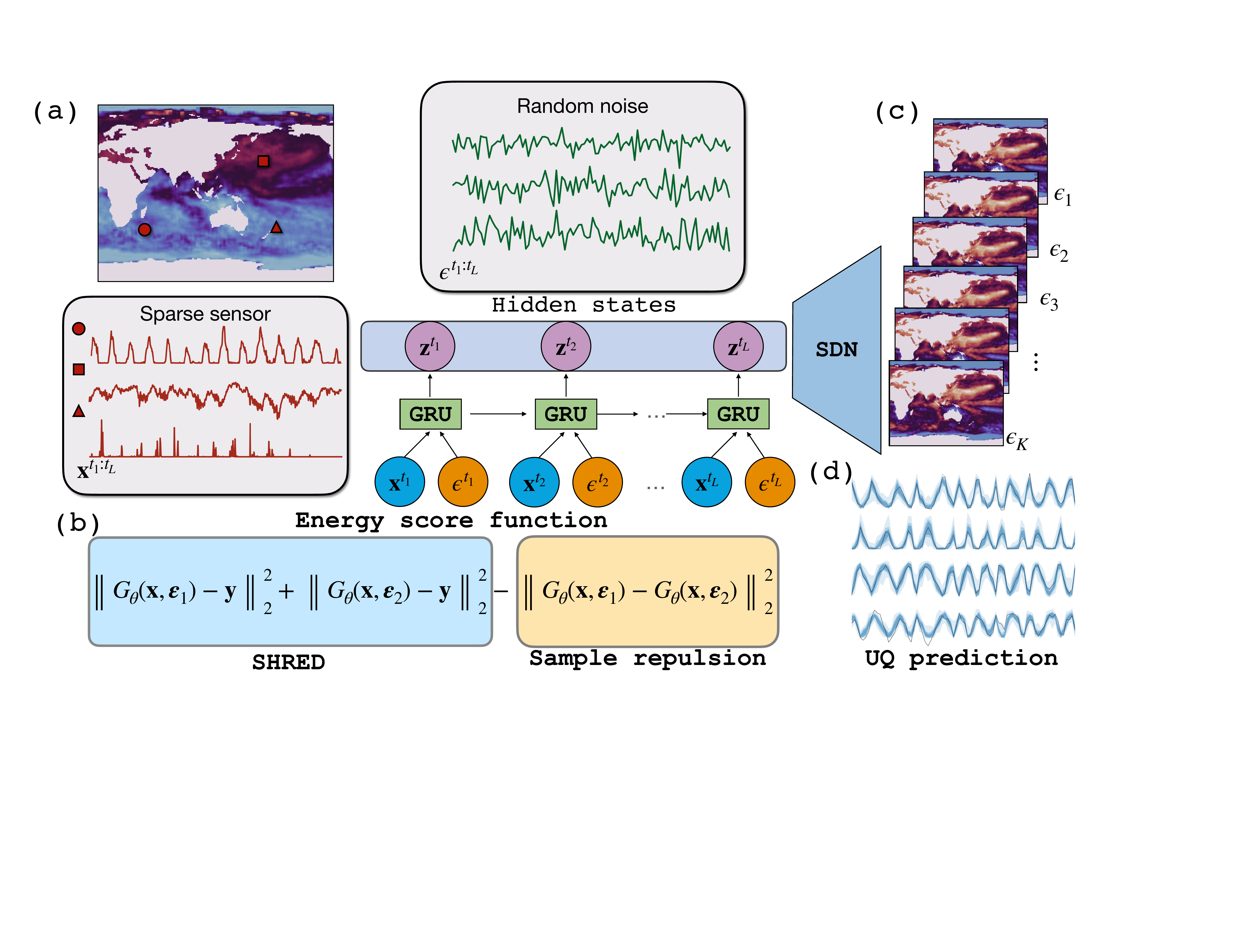}
\caption{(a) UQ-SHRED architecture. The noise vector $\boldsymbol{\varepsilon}$ is concatenated to the sensor input at each lag step and kept constant across the temporal window. 
The random noises are concatenated to the sensor input to the GRU, as stated in Eqn.~\ref{eq:uq_shred_decode}. 
% \textcolor{purple}{I think this figure is great, but it might be helpful to the reader if you make it clear in the figure that the random noise is concatenated to the sensor input. } \textcolor{red}{Mars: Sounds good please check.}
(b) The function $G(\cdot)$ represents the SHRED architecture with temporal units and shallow decoder networks. Two independent forward passes with different noise draws produce $G_\theta(\mathbf{x}, \boldsymbol{\varepsilon}_1)$ and $G_\theta(\mathbf{x}, \boldsymbol{\varepsilon}_2)$, which define the energy score loss~\eqref{eq:energy_loss}. (c) At inference, $K$ forward passes with independently resampled noise yield an empirical predictive distribution from which quantile-based confidence intervals are extracted. (d) Uncertainty estimation from the Monte Carlo samples for test sensor locations. Black: ground truth, blue: 95\% uncertainty bounds. }
\label{fig:fig1_uq}
\end{figure}

We use engression to enable uncertainty quantification for SHRED with the following architecture. 
In Figure~\ref{fig:fig1_uq}~(a), we demonstrate the architecture design of UQ-SHRED. 
Recall that SHRED maps a lagged sensor window $\vx_t \in \mathbb{R}^{L \times p}$ to the full state $\vy_t \in \mathbb{R}^m$ through a composition of a temporal unit (GRU network) and a shallow decoder.
% \textcolor{purple}{small point, but here you specify that the RNN is an LSTM, but in the figure you have GRUs} \textcolor{red}{great point fixed.} 
To apply engression, at each forward pass, a noise vector $\boldsymbol{\varepsilon} \sim \mathcal{N}(\mathbf{0}, \mathbf{I}_{d_\varepsilon})$ is sampled i.i.d. as part of the input. The augmented input $\tilde{\vx}_t = [\vx_t, \boldsymbol{\varepsilon}] $
is fed into the temporal unit. The temporal unit produces a latent representation, and the decoder maps it to the full state:
\begin{align}
G_\theta(\vx, \boldsymbol{\varepsilon}) = f_\mathcal{D}(f_\text{LSTM}([\vx, \boldsymbol{\varepsilon}]); \theta).
\label{eq:uq_shred_decode}
\end{align}

This design is simple but effective which follows the engression algorithm. The training data is injected with random gaussian noise, and the training objective is the following energy score function over the dataset. From~\eqref{eqn:eng_energy_score}, the energy score objective is 
\begin{align}
    \label{eqn:uq_shred_risk}
    \mathcal{R}(\theta) = \E{\lrn{\vy-G_\theta(\vx, \boldsymbol{\varepsilon})}_2 - \frac{1}{2}\lrn{G_\theta(\vx, \boldsymbol{\varepsilon})-G_\theta(\vx, \boldsymbol{\varepsilon'})}_2},
\end{align}
where $\boldsymbol{\varepsilon}, \boldsymbol{\varepsilon}'\sim\mathcal{N}(0, \mathbf{I}_{d_{\varepsilon}})$. 

The energy score function above in Eqn.~\ref{eqn:uq_shred_risk} can be estimated via finite-sample approximation~\cite{shen2023engression} by drawing samples of $\boldsymbol{\varepsilon}\sim\mathcal{N}(0, \mathbf{I}_{d_{\varepsilon}})$ during training. In practice, we approximate this expectation using two i.i.d. noise draws $\boldsymbol{\varepsilon}_1, \boldsymbol{\varepsilon}_2$ and obtain
\begin{align}
    \hat{\vy}_i = f_\mathcal{D}(f_{\text{LSTM}}([\vx,  \boldsymbol{\varepsilon}_i]);\, \theta).
\end{align}
Then, we can compute the training loss 
% \textcolor{purple}{is there a significance to defining $y_1$ here then a general $\hat{y}_i$ down below? Could these be combined? Should the $y_1$ be $\hat{y_1}$?  Also, just out of curiosity, why does the energy score above only take the error for the model output with noise $\boldsymbol{\varepsilon}$ while the loss function below takes the average of the errors for y1 and y2?} \textcolor{red}{Mars: check the new version out and let me know if it looks better.}
\begin{equation}
\mathcal{L}_{\mathrm{ES}}(\theta;\, \vx, \vy)
  = \frac{1}{2}\bigl(\|\hat{\vy}_1 - \vy\| + \|\hat{\vy}_2 - \vy\|\bigr)
  - \frac{1}{2}\|\hat{\vy}_1 - \hat{\vy}_2\|.
\label{eq:energy_loss}
\end{equation}
The overall objective is
\begin{equation}
\min_\theta \;\frac{1}{|\mathcal{D}|}\sum_{(\vx, \vy) \in \mathcal{D}} \mathcal{L}_{\mathrm{ES}}(\theta;\, \vx, \vy).
\label{eq:uq_shred_training}
\end{equation}

Training optimizes a single network that not only reconstructs the target from noise-augmented inputs but, through the repulsive term in the energy score, encourages diverse predictions under different noise realizations. 
In this way, the model will address the injected noise by maximizing the distance between different predicted variants, which discourages mode collapse and drives the model to approximate the true conditional distribution of the target. 
% \textcolor{purple}{just a stylistic comment, but I think putting the figure 1 on the previous page would be better since adding it here breaks up the description of the objective } \textcolor{red}{Mars: Let us make sure about this point when all other edits are settled. Update: I found if we move it forward it will be [figure + section 2.3] but not [figure + 3.1] since 2.3 fits perfectly with the figure. Worst-case scenario we keep the current version. }
\subsection{Inference} 

During inference time, UQ-SHRED can generate samples of $Y\mid X=x$ simply by generating Monte Carlo samples of the noise distribution. 
For a test input sparse sensor trajectory $\vx'$, we have the following steps: 

\begin{enumerate}
    \item Draw $K$ samples from the noise distribution $\boldsymbol{\varepsilon}\overset{\text{i.i.d.}}{\sim}\mathcal{N}(0, \mathbf{I}_{d_{\varepsilon}})$ with $\boldsymbol{\varepsilon}_1, ..., \boldsymbol{\varepsilon}_K$. 
    \item Then $\hat\vy_k=\mathcal{D}(f_\text{LSTM}(\vx', \boldsymbol{\varepsilon}_k)), k=1,...,K$ is a sample of the estimated distribution of $Y\mid X=x$.
    \item We further obtain estimators through empirical samples $\hat{\vy}_1, ..., \hat{\vy}_K$. For example, we can obtain the conditional mean from $\Ep{\boldsymbol{\varepsilon}}{\mathcal{D}(f_\text{LSTM}(\vx', \boldsymbol{\varepsilon}))}$ and conditional $\alpha$-quantile $\mathbb{Q}_\alpha(\mathcal{D}(f_\text{LSTM}(\vx', \boldsymbol{\varepsilon})))$. Both quantities are estimated via plug-in estimators from the Monte Carlo samples of $\boldsymbol{\varepsilon}_k$. 
\end{enumerate}
Figure~\ref{fig:fig1_uq}~(c) illustrates this process of probabilistic inference procedure for uncertainty quantification.

\subsection{Theoretical guarantee} 

In this subsection, we study theoretical guarantee of UQ-SHRED on distributional learning, conditional mean estimation, and quantile learning. 

\begin{theorem}
\label{thm:uq_shred_population}

Suppose we have the following: 
    
(1.a) The sensor-state pair $(\vx,\vy)$ admits a joint distribution, and for almost every $\vx \in \mathbb{R}^{L\times p}$, the conditional law
$\mathcal{L}(\vy \mid \vx)$ is well-defined.

(1.b) The state has finite first moment $\E{\|\vy\|_2} < \infty$.

(1.c) The UQ-SHRED architecture $G_\theta(\vx,\boldsymbol{\varepsilon})$ induces a conditional law $P_\theta(\cdot \mid \vx)$, and the model class is well-specified that there exists $\theta^*$ such that for almost every $\vx$, we have $P_{\theta^*}(\cdot \mid \vx)=\mathcal{L}(\vy\mid \vx)$. 

% (1.d) The energy score is strictly proper on the class of induced conditional laws $\{P_\theta(\cdot\mid \vx), \theta\in\Theta\}$. 

(1.d) For almost every $\vx\in\mathbb R^{L\times p}$, each induced conditional law $P_\theta(\cdot\mid \vx)$, $\theta\in\Theta$, has finite first moment.

% \textbf{\color{red}{I think this assumption potentially can be relaxed a little bit, since the energy score is strictly proper when applied to the class of multivariate distributions with finite first moment, here we may only need all induced conditional laws to have finite first moments. (please check if I miss something here)}}\xinwei{i agree.}

Then, the population minimizer $\hat\theta\in \arg\min_\theta \mathcal{R}(\theta)$ (defined in \eqref{eqn:uq_shred_risk}) satisfies
\begin{align}
    P_{\hat\theta}(\cdot \mid \vx)
=
\mathcal{L}(\vy\mid \vx)
\qquad
\forall\vx\text{ a.e.}
\end{align}
\end{theorem}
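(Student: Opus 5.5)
The plan is to reduce the population risk $\mathcal{R}(\theta)$ in \eqref{eqn:uq_shred_risk} to an average, over $\vx$, of pointwise negative energy scores. Strict propriety of the energy score (Section~\ref{sec:engression_background}) then applies conditionally on $\vx$.

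\textbf{Step 1: conditioning on $\vx$.} Because $\boldsymbol{\varepsilon},\boldsymbol{\varepsilon}'$ are i.i.d.\ and independent of $(\vx,\vy)$, and the conditional law exists by (1.a), I would condition on $\vx$ and use the tower property. Writing $Z,Z'\overset{i.i.d.}{\sim}P_\theta(\cdot\mid\vx)$ independent of $\vy\mid\vx$, this gives
\begin{align*}
\mathcal{R}(\theta)=\E{S(P_\theta(\cdot\mid\vx),\mathcal{L}(\vy\mid\vx))},\qquad
S(P,Q)=\Ep{Y\sim Q,\,Z\sim P}{\|Y-Z\|_2}-\tfrac12\Ep{P}{\|Z-Z'\|_2}=-\Ep{Q}{ES(P,Y)}.
\end{align*}
All terms are nonnegative, so Tonelli justifies the interchange.

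Finiteness needs checking. For a.e.\ $\vx$, (1.d) and the conditional version of (1.b) ($\E{\|\vy\|_2\mid\vx}<\infty$ a.s.) make each term finite, since $\|Y-Z\|\le\|Y\|+\|Z\|$. Globally, $\mathcal{R}(\theta)$ might be $+\infty$ for some $\theta$. That does no harm: such $\theta$ cannot be minimizers, because $\mathcal{R}(\theta^*)$ is finite. Indeed, under (1.c), $\E{\|Z\|\mid\vx}=\E{\|\vy\|\mid\vx}$, so (1.b) bounds $\mathcal{R}(\theta^*)$.

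\textbf{Step 2: pointwise propriety.} For distributions $P,Q$ with finite first moments, the energy distance identity gives
\begin{align*}
S(P,Q)-S(Q,Q)=\tfrac12\bigl(2\E{\|Y-Z\|}-\E{\|Z-Z'\|}-\E{\|Y-Y'\|}\bigr)=\tfrac12\mathcal{E}(P,Q)\ge 0,
\end{align*}
with equality iff $P=Q$. This is the strict propriety of~\cite{szekely2003statistics,szekely2023energy}, which I would cite rather than reprove; the Fourier representation of $\mathcal{E}$ gives it. Taking $Q=\mathcal{L}(\vy\mid\vx)$, well-specification (1.c) yields, for every $\theta$,
\begin{align*}
\mathcal{R}(\theta)-\mathcal{R}(\theta^*)=\tfrac12\E{\mathcal{E}(P_\theta(\cdot\mid\vx),\mathcal{L}(\vy\mid\vx))}\ge 0 .
\end{align*}
Hence $\theta^*$ is a minimizer and the minimal value is $\mathcal{R}(\theta^*)$.

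\textbf{Step 3: identification.} Any $\hat\theta\in\arg\min\mathcal{R}$ satisfies $\mathcal{R}(\hat\theta)=\mathcal{R}(\theta^*)<\infty$. Subtracting the finite quantity $\mathcal{R}(\theta^*)$ is legitimate, so $\E{\mathcal{E}(P_{\hat\theta}(\cdot\mid\vx),\mathcal{L}(\vy\mid\vx))}=0$. A nonnegative random variable with zero mean vanishes a.s. Therefore $\mathcal{E}=0$ for a.e.\ $\vx$, and by the equality case $P_{\hat\theta}(\cdot\mid\vx)=\mathcal{L}(\vy\mid\vx)$ a.e.

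\textbf{Main obstacle.} The delicate part is the measure-theoretic bookkeeping in Step 1. The difference $\mathcal{R}(\theta)-\mathcal{R}(\theta^*)$ must be written as an expectation of $\mathcal{E}$ without an $\infty-\infty$ ambiguity, which requires the pointwise finiteness from (1.b) and (1.d). A related subtlety is that the "a.e." in (1.c) and (1.d) is with respect to the marginal of $\vx$. I would handle this by working on the full-measure set where all three conditions hold. There, the conditional integrands are finite, and I would compute $S(P_\theta,Q)-S(Q,Q)$ before integrating over $\vx$.
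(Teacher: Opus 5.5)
Your proposal is correct and follows the paper's route: condition on $\vx$, apply strict propriety of the energy score pointwise, use (1.c) to see that $\theta^*$ attains the minimum, and conclude that any minimizer must agree with $\mathcal{L}(\vy\mid\vx)$ for almost every $\vx$. Your identity $\mathcal{R}(\theta)-\mathcal{R}(\theta^*)=\tfrac12\mathbb{E}[\mathcal{E}(P_\theta(\cdot\mid\vx),\mathcal{L}(\vy\mid\vx))]$, together with $\mathcal{R}(\theta^*)<\infty$, replaces the paper's contradiction step and states explicitly the finiteness that the paper's ``taking expectation yields a strict inequality'' step relies on without saying so.
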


Theorem~\ref{thm:uq_shred_population} demonstrates that UQ-SHRED is a valid distributional learning method for this sparse sensing problem. Condition (1.a) assumes that we have sufficient sensor history from the Takens's theorem to reconstruct the spatial states. Condition (1.b) ensures the state is finite. Condition (1.c) assumes that there exists a minimizer within the model class that can represent the true distributional law $\mathcal{L}(\vy\mid\vx)$. Condition (1.d) ensures that the model-induced conditional laws have finite first moments, so that the energy score is well defined and strictly proper. 

% \textbf{\color{red} If we use the weaker assumption earlier, we could write a bit reasoning here to establish strictly proper}

\begin{proof}

% For each fixed $\vx$, let
% \[
% P_{\theta,\vx}:=P_\theta(\cdot\mid \vx)
% \qquad\text{and}\qquad
% P_\vx:=\mathcal{L}(\vy\mid \vx).
% \]

Conditioning on a fixed $\vx$, from the law of total expectation, the population risk can be written as
\begin{align}
    \mathcal{R}(\theta) &= \Ep{\vx}{\mathcal{R}_{\vx}(\theta)} \\
    &= \Ep{\vx}{
\|\vy-G_\theta(\vx,\boldsymbol{\varepsilon})\|_2
-\frac12 \|G_\theta(\vx,\boldsymbol{\varepsilon})-G_\theta(\vx,\boldsymbol{\varepsilon}')\|_2
\,\Big|\, \vx}
\end{align}
Conditional on $X=\vx$, the random variable $G_\theta(\vx,\boldsymbol{\varepsilon})$ follows the
distribution $P_\theta(\cdot\mid \vx)$, while $\vy$ follows the true conditional
law $\mathcal L(\vy\mid X=\vx)$. Therefore $\mathcal R_\vx(\theta)$ coincides with
the energy-score risk of the predictive distribution $P_\theta(\cdot\mid \vx)$
relative to the true conditional distribution $\mathcal L(\vy\mid X=\vx)$.

By condition (1.d), for almost every $\vx$, all $P_\theta(\cdot\mid \vx)$ have finite first moments. Hence the energy score is strictly proper on this pair of conditional laws~\cite{szekely2003statistics,szekely2023energy}.
As the energy score is a strictly proper scoring rule,
the energy-score risk satisfies
\begin{align}
\mathcal R_\vx(Q) \ge \mathcal R_\vx(P_\vx)
\end{align}
for any predictive distribution $Q$, with equality iff $Q=P_\vx$.
Taking $Q = P_\theta(\cdot|\vx)$ therefore implies that
\begin{align}
\mathcal R_\vx(\theta)
\end{align}
is minimized if and only if
\begin{align}
P_\theta(\cdot|\vx)=\mathcal L(\vy|X=\vx).
\end{align}
It means that given $\vx$, we must have $P_\theta(\cdot|\vx)=\mathcal L(\vy|X=\vx)$ to be the minimizer.

We then show further this holds for almost every $\vx$ by contradiction.

By assumption (1.c), there exists $\theta^*$ such that
\begin{align}
P_{\theta^*}(\cdot\mid \vx)=\mathcal L(\vy\mid X=\vx),\;
\text{for almost every } \vx,
\end{align}
so $\theta^*$ minimizes the population risk. 

Now suppose $\hat\theta$ is a population minimizer and
\begin{align}
P_{\hat\theta}(\cdot\mid x)\neq \mathcal L(\vy\mid X=\vx)
\end{align}
on a set of $\vx$ with positive measure, then strict propriety implies that
\begin{align}
\mathcal R_\vx(\hat\theta) > \mathcal R_\vx(\theta^*)
\end{align}
on that set. Taking expectation over $X$ yields
\begin{align}
\mathcal R(\hat\theta) > \mathcal R(\theta^*).
\end{align}

Then this is no longer a population minimizer. Contradiction! 

Therefore, we have
\begin{align}
P_{\hat\theta}(\cdot\mid \vx)=\mathcal L(\vy\mid X=\vx),\;
\text{for almost every } \vx.
\end{align}
\end{proof}

% \textbf{\color{red}{I tried to draft a small remark here, basically saying what happens under misspecification - when the strongest condition (1.c) does not hold, the energy score still finds the best approximation in the model class (basically saying it's still proper) Something similar to the following - }}

\begin{remark}
Distributional approximation of UQ-SHRED under model misspecification. 
\end{remark}
Because the energy score is a strictly proper scoring rule, the population minimizer under misspecification satisfies
\begin{align}
    \hat\theta \;\in\; \arg\min_{\theta\in\Theta}\; \mathcal{R}(\theta)
    \;=\;
    \arg\min_{\theta\in\Theta}\; \Ep{\vx}{\mathrm{ES}\!\left(P_\theta(\cdot\mid\vx),\; \mathcal{L}(\vy\mid\vx)\right)},
\end{align}
where $\mathrm{ES}(Q, P)$ denotes the expected energy score of the predictive distribution $Q$ evaluated against draws from $P$. Since strict propriety implies that $\mathrm{ES}(Q, P) \ge \mathrm{ES}(P, P)$ with equality if and only if $Q = P$, the population minimizer $P_{\hat\theta}(\cdot\mid\vx)$ is the \emph{closest} element of $\mathcal{P}_\Theta$ to the true conditional law in the sense of minimizing the expected energy score divergence
\begin{align}
    d_{\mathrm{ES}}(Q, P) \;:=\; \mathrm{ES}(Q, P) - \mathrm{ES}(P, P)\;=\;\frac{1}{2}\,\E{\|Z - Z'\|_2} - \E{\|Z - W\|_2} + \frac{1}{2}\,\E{\|W - W'\|_2},
\end{align}
where $Z, Z' \overset{i.i.d.}{\sim} Q$ and $W, W' \overset{i.i.d.}{\sim} P$, which is nonnegative and equals zero if and only if $Q = P$.

Thus, even when condition (1.c) does not hold exactly, UQ-SHRED returns the best distributional approximation within its model class as measured by the energy score. The practical consequence is that the predictive distribution may not achieve exact calibration, but it remains the optimal approximation available to the architecture, and the degree of miscalibration is controlled by the approximation capacity of the model class $\mathcal{P}_\Theta$.

\begin{corollary}
\label{cor:uq_shred_mean}
Suppose that conditions (1.a)-(1.d) of Theorem~\ref{thm:uq_shred_population} hold. 
Additionally, we assume 

(1.e) The state has finite second moment $\E{\|\vy\|_2^2} < \infty$.

Then any population minimizer $\hat\theta$ in Theorem~\ref{thm:uq_shred_population} satisfies
\[
\Ep{P_{\hat\theta}}{\vy \mid \vx}
=
\E{\vy\mid \vx}
\qquad
\text{for almost every } \vx,
\]
and
\[
\mathrm{Cov}_{P_{\hat\theta}}(\vy \mid \vx)
=
\mathrm{Cov}(\vy\mid \vx)
\qquad
\text{for almost every } \vx.
\]
\end{corollary}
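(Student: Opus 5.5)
The plan is to reduce the corollary to Theorem~\ref{thm:uq_shred_population} and then read off moments. Under (1.a)--(1.d), the theorem gives $P_{\hat\theta}(\cdot\mid\vx)=\mathcal{L}(\vy\mid\vx)$ for every $\vx$ outside a null set $N$. Conditional means and covariances are functionals of the conditional law alone. So for $\vx\notin N$ it suffices to check that these functionals are well defined (finite) for the true conditional law. Equality for the model law then follows automatically, because the two laws coincide.

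The first step is finiteness. By (1.e) and the tower property, $\E{\E{\|\vy\|_2^2\mid\vx}}=\E{\|\vy\|_2^2}<\infty$. Hence $\E{\|\vy\|_2^2\mid\vx}<\infty$ for all $\vx$ outside a second null set $N'$. On the complement of $N\cup N'$, the law $\mathcal{L}(\vy\mid\vx)$ therefore has finite second moment. By Cauchy--Schwarz it also has a finite first moment, so $\E{\vy\mid\vx}=\int y\,\mathcal{L}(dy\mid\vx)$ and $\mathrm{Cov}(\vy\mid\vx)=\int (y-\mu_\vx)(y-\mu_\vx)^\top\mathcal{L}(dy\mid\vx)$ are finite. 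Here we use a regular version of the conditional law, whose existence is guaranteed by (1.a).

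The second step is the identification. For $\vx\notin N\cup N'$, the measure $P_{\hat\theta}(\cdot\mid\vx)$ is the same measure as $\mathcal{L}(\vy\mid\vx)$. Hence
\[
\Ep{P_{\hat\theta}}{\vy\mid\vx}=\int y\,P_{\hat\theta}(dy\mid\vx)=\int y\,\mathcal{L}(dy\mid\vx)=\E{\vy\mid\vx}.
\]
The covariance is handled the same way, integrating $yy^\top$ and subtracting the outer product of the means. Since $N\cup N'$ is null, both identities hold for almost every $\vx$. Note that finiteness of the model's second moment is not assumed separately: it is inherited from the equality of laws.

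The main subtlety is making sure the integral against the kernel agrees almost surely with the abstract conditional expectation $\E{\vy\mid\vx}$. I would invoke the standard disintegration fact: for integrable $h(\vy)$, the map $\vx\mapsto\int h\,d\mathcal{L}(\cdot\mid\vx)$ is a version of $\E{h(\vy)\mid\vx}$. Applying this with $h(y)=y$ and $h(y)=yy^\top$, both integrable by (1.e), settles the point. Beyond this, the argument is routine.
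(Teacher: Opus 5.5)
Your proposal is correct and follows essentially the same route as the paper: apply Theorem~\ref{thm:uq_shred_population} to get $P_{\hat\theta}(\cdot\mid\vx)=\mathcal{L}(\vy\mid\vx)$ for almost every $\vx$, then read off the conditional mean and covariance, with (1.e) guaranteeing they are finite. You spell out details the paper's one-line proof leaves implicit, namely the extra null set obtained from the tower property and the fact that integrals against the regular conditional law are versions of $\E{\vy\mid\vx}$.
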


\begin{proof}
    This is a direct result of Thm.~\ref{thm:uq_shred_population}. 
    We know under $\theta^*$ that we obtain the true law $\mathcal{L}(\vy\mid\vx)$. Therefore, the expectation and covariance will be the same as the expectation and covariance given the true law, from condition (1.e) with finite second moment.
\end{proof}

\begin{theorem}
\label{thm:monte_carlo}
For a test input $\vx \in \mathbb{R}^{L\times p}$, a coordinate index $j\in\{1,\dots,m\}$, and a level $\alpha\in(0,1)$. We generate $K$ Monte Carlo samples   
$\hat{\vy}^{(k)}
=
f_\mathcal{D}\!\left(f_{\mathrm{LSTM}}([\vx,\boldsymbol{\varepsilon}_k]);\theta\right),\boldsymbol{\varepsilon}_k \overset{i.i.d.}{\sim} \mathcal{N}(0,I_{d_{\varepsilon}}).$
Let $F_{\theta,j}(\cdot\mid \vx)$ denote the conditional CDF of the $j$-th coordinate under the law $P_\theta(\cdot\mid \vx)$. Define the coordinate-wise predictive $\alpha$-quantile by
\[
q_{\theta,j,\alpha}(\vx)
=
\inf\{t\in\mathbb{R}: F_{\theta,j}(t\mid \vx)\ge \alpha\}.
\]

Suppose we have

(2.a) The distribution function $F_{\theta,j}(\cdot\mid \vx)$ is continuous at $q_{\theta,j,\alpha}(\vx)$ and strictly increasing in a neighborhood of $q_{\theta,j,\alpha}(\vx)$.

Define the empirical distribution function and the empirical coordinate-wise $\alpha$-quantile
\begin{align}
\widehat F_{K,j}(t\mid \vx)=\frac{1}{K}\sum_{k=1}^K
\mathbf{1}\{\hat{Y}^{(k)}_j \le t\},\qquad \widehat q_{K,j,\alpha}(\vx)=\inf\{t\in\mathbb{R}: \widehat F_{K,j}(t\mid \vx)\ge \alpha\}.
\end{align}
Then, we have
\begin{align}
\widehat q_{K,j,\alpha}(\vx)
\xrightarrow[]{a.s.}
q_{\theta,j,\alpha}(\vx),
\end{align}
as $K\to\infty$. 
\end{theorem}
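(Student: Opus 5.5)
We fix $\vx$, $j$, $\alpha$ and $\theta$ throughout, and treat the scalars $\hat Y^{(k)}_j$, $k=1,\dots,K$, as a sequence of random variables. Because the noise draws $\boldsymbol{\varepsilon}_k$ are i.i.d.\ and the map $\boldsymbol{\varepsilon}\mapsto \big(f_\mathcal{D}(f_{\mathrm{LSTM}}([\vx,\boldsymbol{\varepsilon}]);\theta)\big)_j$ is a fixed measurable function (a composition of continuous network layers), the $\hat Y^{(k)}_j$ are i.i.d.\ with common CDF $F_{\theta,j}(\cdot\mid\vx)$. This is exactly the induced conditional law $P_\theta(\cdot\mid\vx)$ of (1.c), projected onto coordinate $j$. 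The problem therefore reduces to the classical strong consistency of the sample quantile, and we prove it directly.

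The first step applies the strong law of large numbers pointwise. For each fixed $t$, the indicators $\mathbf{1}\{\hat Y^{(k)}_j\le t\}$ are i.i.d.\ Bernoulli variables with mean $F_{\theta,j}(t\mid\vx)$. Hence $\widehat F_{K,j}(t\mid\vx)\to F_{\theta,j}(t\mid\vx)$ almost surely. The Glivenko--Cantelli theorem would give uniform convergence, but only two points are needed here.

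For the second step, write $q=q_{\theta,j,\alpha}(\vx)$ and fix $\delta>0$ small enough that $[q-\delta,q+\delta]$ lies in the neighborhood from (2.a). Strict monotonicity there gives $F_{\theta,j}(q-\delta\mid\vx)<F_{\theta,j}(q\mid\vx)$. Continuity at $q$ together with the definition of $q$ as an infimum gives $F_{\theta,j}(q\mid\vx)=\alpha$. So $F_{\theta,j}(q-\delta\mid\vx)<\alpha<F_{\theta,j}(q+\delta\mid\vx)$, where the right inequality again uses strict increase. By the first step, on an almost-sure event there is $K_0$ such that for all $K\ge K_0$ we have $\widehat F_{K,j}(q-\delta\mid\vx)<\alpha$ and $\widehat F_{K,j}(q+\delta\mid\vx)\ge\alpha$. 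Since $\widehat F_{K,j}$ is nondecreasing, the definition of the empirical quantile as an infimum then gives $q-\delta<\widehat q_{K,j,\alpha}(\vx)\le q+\delta$.

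The third step upgrades this to almost-sure convergence. We take $\delta=1/n$ for $n$ large and intersect the countably many resulting probability-one events. On this intersection, $\limsup_K|\widehat q_{K,j,\alpha}(\vx)-q|\le 1/n$ for every $n$, which is the claim.

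The main point needing care is the second step: showing $F_{\theta,j}(q\mid\vx)=\alpha$ exactly, and getting strict inequalities on both sides of $q$. Right-continuity of the CDF alone only gives $F\ge\alpha$ at $q$. Continuity at $q$ combined with $F<\alpha$ to the left of $q$ gives equality, and strict increase is what separates $\alpha$ from the values of $F$ at $q\pm\delta$. The i.i.d.\ reduction in the first paragraph is routine, but it should note that $\theta$ is held fixed and not re-estimated from the samples.
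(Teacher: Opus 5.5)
Your proof is correct and takes essentially the same route as the paper. Both reduce to i.i.d.\ draws from $F_{\theta,j}(\cdot\mid\vx)$, use almost-sure convergence of $\widehat F_{K,j}$ at $q\pm\delta$ to trap $\widehat q_{K,j,\alpha}(\vx)$ in $[q-\delta,q+\delta]$, and then let $\delta\to 0$; the differences are minor: you use the pointwise strong law at two points plus an explicit countable intersection over $\delta=1/n$ where the paper cites Glivenko--Cantelli (whose single uniform-convergence event makes that intersection automatic), and you check $F_{\theta,j}(q\mid\vx)=\alpha$ explicitly.
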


Theorem~\ref{thm:monte_carlo} demonstrates that the Monte Carlo inference algorithm UQ-SHRED is a statistically valid quantile learning method. The theorem is based on conditions that $\boldsymbol{\varepsilon}$ samples are i.i.d. and the CDF is continuous.

\begin{proof}
The random variables $\hat{\vy}^{(1)}_j,\dots,\hat{\vy}^{(K)}_j$ 
are i.i.d. draws given $\vx$ from the one-dimensional predictive distribution with CDF $F_{\theta,j}(\cdot\mid \vx)$ as $\boldsymbol{\varepsilon}_i$'s are i.i.d. By Glivenko-Cantelli theorem~\cite{van2000asymptotic}, we have
\begin{align}
\sup_{t\in\mathbb{R}}
\left|
\widehat F_{K,j}(t\mid \vx)-F_{\theta,j}(t\mid \vx)
\right|
\xrightarrow[]{a.s.} 0.
\end{align}
For all $\delta>0$, from (2.a), we have both
\begin{align}
    F_{\theta,j}(q_{\theta,j,\alpha}(\vx)-\delta \mid \vx) < \alpha,\;
F_{\theta,j}(q_{\theta,j,\alpha}(\vx)+\delta \mid \vx) > \alpha
\end{align}
for sufficiently small $\delta>0$. Since $\widehat F_{K,j} \overset{a.s.}{\to} F_{\theta,j}$, we know there exists sufficiently large $K$ that
\begin{align}
\widehat F_{K,j}(q_{\theta,j,\alpha}(\vx)-\delta \mid \vx) < \alpha,\;
\widehat F_{K,j}(q_{\theta,j,\alpha}(\vx)+\delta \mid \vx) > \alpha,\;a.s.
\end{align}
By the definition of the empirical quantile, we know
\begin{align}
q_{\theta,j,\alpha}(\vx)-\delta
\le
\widehat q_{K,j,\alpha}(\vx)
\le
q_{\theta,j,\alpha}(\vx)+\delta,\;a.s.
\end{align}
Since $\delta>0$ can be arbitrary small, under $K\to\infty$, we have
\begin{align}
\widehat q_{K,j,\alpha}(\vx)
\xrightarrow[]{a.s.}
q_{\theta,j,\alpha}(\vx).
\end{align}
\end{proof}

\section{Experiments}
\label{sec:experiments}

\subsection{Evaluation metrics}
\label{sec:metrics}
% TODO: Calibration, CRPS, Sharpness, Error-vs-uncertainty

We assess the quality of the predictive distribution using three complementary metrics.\\

\noindent
\textbf{Calibration.} For each nominal confidence level $\alpha \in (0,1)$, we compute the fraction of ground-truth values that fall within the corresponding $\alpha$-level prediction interval, averaged over all spatiotemporal locations. We denote this observed coverage $\hat{\alpha}$. A well-calibrated model satisfies $\hat{\alpha} \approx \alpha$ for all levels, which we visualise as a calibration diagram plotting $\hat{\alpha}$ versus $\alpha$; the diagonal represents ideal calibration.\\

\noindent
\textbf{Continuous ranked probability score.} The CRPS~\cite{gneiting2007strictly} is a proper scoring rule that jointly assesses calibration and sharpness. For a set of predictive samples $\{\hat{y}^{(k)}\}_{k=1}^K$ and an observation $y$, the CRPS admits the representation
\begin{equation}
\mathrm{CRPS} = \frac{1}{K}\sum_{k=1}^K |\hat{y}^{(k)} - y| \;-\; \frac{1}{2K^2}\sum_{k=1}^K\sum_{j=1}^K |\hat{y}^{(k)} - \hat{y}^{(j)}|,
\label{eq:crps}
\end{equation}
which we average over all spatial locations and time steps. Lower values indicate better probabilistic forecasts.\\

\noindent
\textbf{Sharpness.} The average width of the prediction interval at each confidence level, measuring the concentration of the predictive distribution. Conditional on correct calibration, narrower intervals (sharper forecasts) are preferred.\\

\begin{figure}[t]
\centering    
\includegraphics[width=\columnwidth]{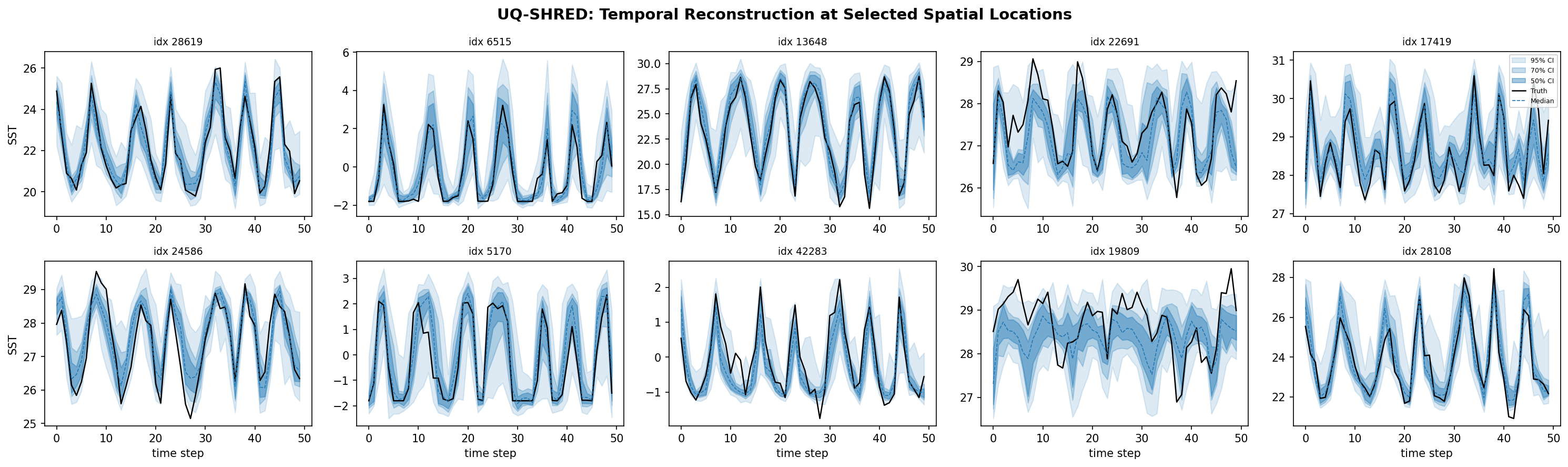}
\caption{UQ-SHRED reconstruction of the SST dataset. Shaded regions: $50\%$, $70\%$, and $95\%$ confidence intervals; blue dashed: median; black solid: ground truth.}
\label{fig:uq_sst_timeseries}
\end{figure}

\subsection{Sea-surface temperature data}

\paragraph{Dataset} The Sea-surface temperature (SST) dataset contains $1,400$ snapshots of weekly averaged sea surface temperature from 1992 to 2019 reported by NOAA~\cite{reynolds2002improved}. The dataset is a reanalysis dataset through interpolation combining in-situ buoy, ship, and satellite
429 observations and every snapshot includes a $180\times 360$ grid with $44,219$ grid points corresponding to the sea-surface.  
% \textcolor{purple}{I had to look up what observational reanalysis is, since I didn't know what it was... it seems like the term "reanalysis" is more commonly used than "observational reanalysis"} \textcolor{red}{Mars: sounds good I think this is some grammarly correction thing from my end so thanks}

We train UQ-SHRED with global SST data from only 3 random sensors, with temporal lag $L=52$ (approximately one year of history), noise dimension $d_{\varepsilon}=1000$, and $200$ Monte Carlo samples at inference to estimate the quantiles. Figure~\ref{fig:uq_sst_timeseries} presents reconstruction from randomly selected sensors and uncertainty quantification on test data. The median prediction tracks the ground truth (black) across both fields. The confidence bands widen at the global temperature shifts, where the rate of change is highest and sparse sensors provide less constraint on the global field. 

\begin{figure}[t]
\centering    
\includegraphics[width=\columnwidth]{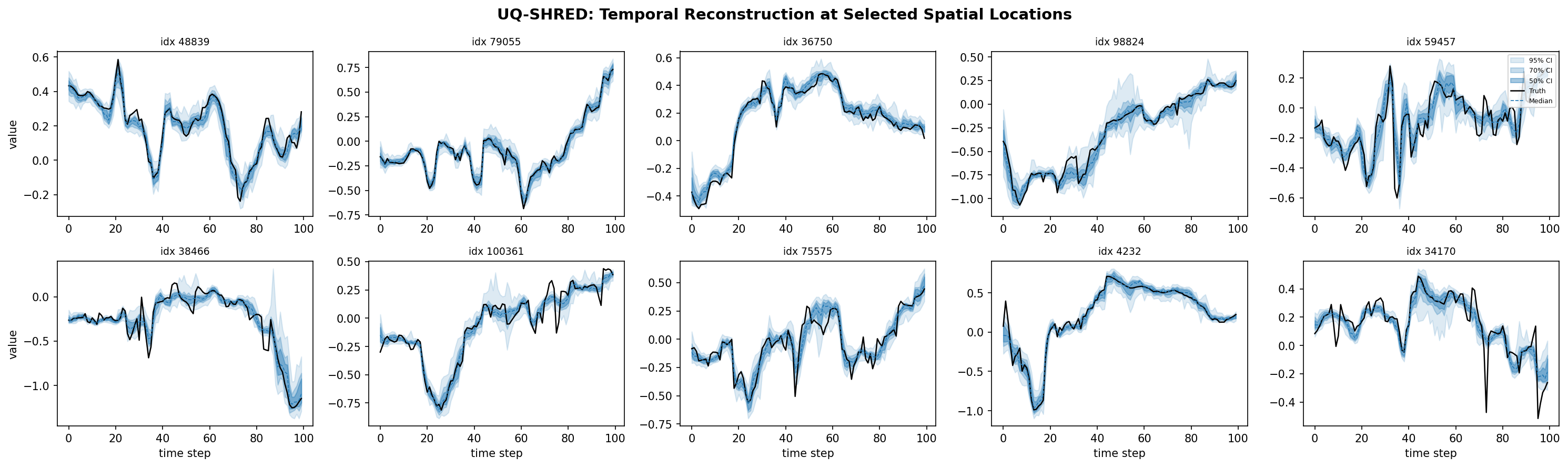}
\caption{UQ-SHRED reconstruction of the isotropic turbulent flow. Shaded regions: $50\%$, $70\%$, and $95\%$ confidence intervals; blue dashed: median; black solid: ground truth.}
\label{fig:uq_iso_timeseries}
\end{figure}

\subsection{Isotropic turbulent flow}

\paragraph{Dataset} The isotropic turbulent flow dataset is a direct numerical simulation of forced isotropic turbulence from the Johns Hopkins Turbulence Database (JHTDB)~\cite{li2008public}. It is generated from the incompressible Navier-Stokes equation using pseudo-spectral method with periodic boundary conditions on a $1024^3$ grid. We extract 2D $350\times 350$ spatial slices of the pressure field, yielding $1,667$ temporal samples.

We train UQ-SHRED with 3 random sensors, $L=100$, noise dimension $d_{\varepsilon}=100$, and $50$ Monte Carlo samples at inference. Figure~\ref{fig:uq_iso_timeseries} presents reconstruction and uncertainty quantification on test data. The confidence bands widen during rapid local pressure fluctuations, reflecting increased reconstruction ambiguity when the flow undergoes violent spatiotemporal changes.

\begin{figure}[t]
\centering    
\includegraphics[width=\columnwidth]{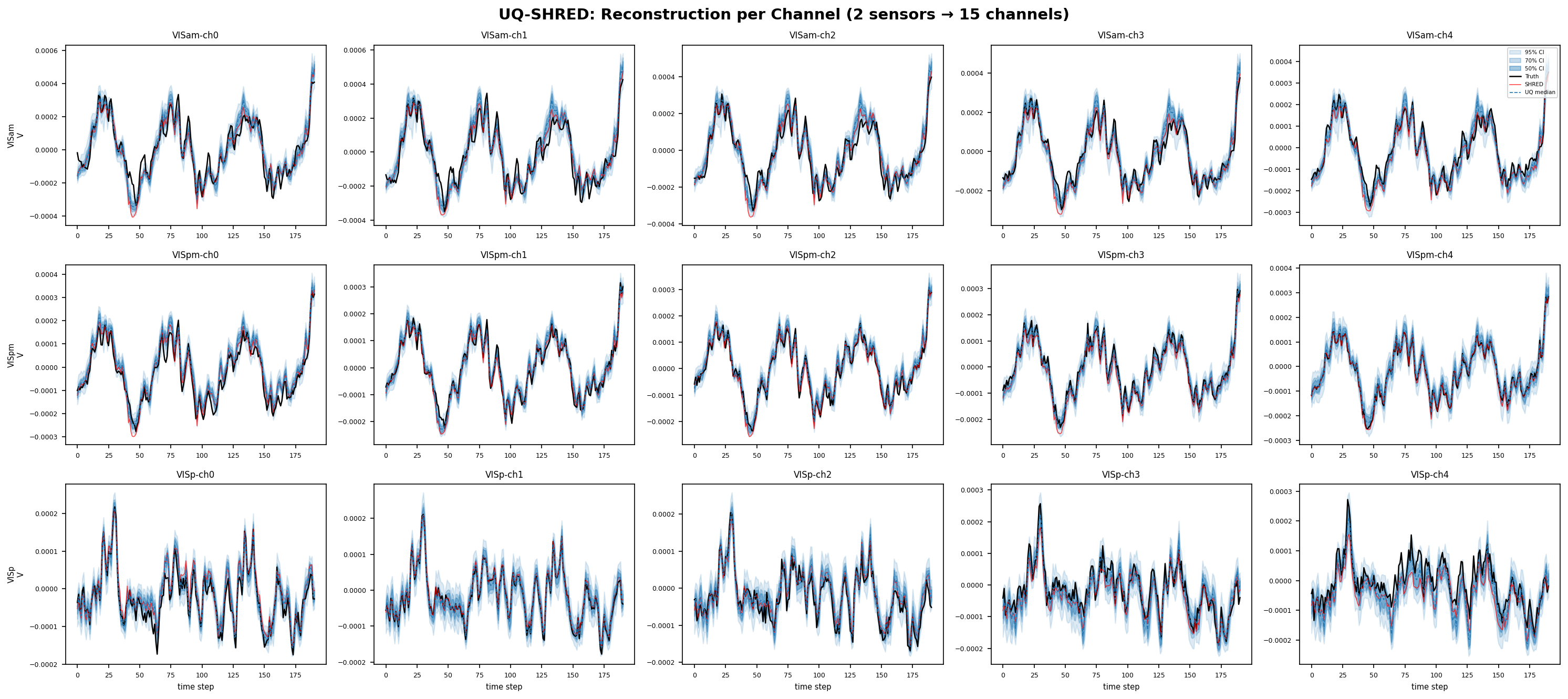}
\caption{UQ-SHRED reconstruction of the neural activity dataset. Shaded regions: $50\%$, $70\%$, and $95\%$ confidence intervals; blue dashed: median; black solid: ground truth.}
\label{fig:uq_neuro_timeseries}
\end{figure}

\subsection{Neural activity data}

\paragraph{Dataset} 

The neural data was obtained from the Allen Institute Visual Coding---Neuropixels project \cite{siegle2021survey}. This dataset contains extracellular electrophysiology recordings from the mouse visual cortex and thalamus during a variety of visual stimuli. We analyzed 0.5 kHz local field potential (LFP) recordings from the anteromedial (VISam), posteromedial (VISpm), and primary (VISp) visual cortex during presentation of a 2-second $180^{\circ}$ drifting grating stimulus. Rather than recording activity from a single unit, LFPs measure population activity from ensembles of neurons \cite{kajikawa2011local}. While LFPs are valuable for studying network dynamics, they require invasive recording techniques and are susceptible to noise and interference, which introduce additional ambiguity into the signals \cite{einevoll2013modelling}. Our aim is to train UQ-SHRED on this dataset to reconstruct neural population activity from a limited number of sensors while providing valid uncertainty quantification.

We train UQ-SHRED on LFP data with 2 randomly selected channels, $L=100$, noise dimension $d_{\varepsilon}=50$, and $100$ Monte Carlo samples at inference. Figure~\ref{fig:uq_neuro_timeseries} presents reconstruction and uncertainty quantification on test data. The confidence bands widen at high-frequency local perturbations in the neural signal, consistent with the increased ambiguity introduced by noise and interference at finer temporal scales.

\subsection{Solar activity data}

\paragraph{Dataset} The solar activity data describes active physical and chemical reactions on the sun. We obtain the solar data from publicly available dataset from NASA. The Solar Dynamics Observatory collects observations of the sun, and we particularly focus on the wavelength 171 Ångstroms, which is known to be the most active and commonly used part of the sun. 
It specifically highlights a spectral line emitted by iron atoms that have
lost 8 electrons at temperatures of 600,000 K. We collected 274 frames of an active area of the sun with $30$ minutes temporal gap of each frame.

We train UQ-SHRED with 3 randomly selected sensors, $L=75$, noise dimension $d_{\varepsilon}=200$, and $100$ Monte Carlo samples at inference. Figure~\ref{fig:uq_sun_timeseries} presents reconstruction and uncertainty quantification on test data. The confidence bands widen during rapid changes in solar irradiance, particularly around transient flare events.

% In this example, we train UQ-SHRED on solar activity data with 3 randomly selected sensors. We set $L=75$ and noise dimension $d_{\varepsilon}=200$. We use $100$ Monte Carlo samples during inference to estimate the quantiles.  Figure~\ref{fig:uq_sun_timeseries} presents spatial reconstruction from sensor level and uncertainty quantification on test data. The median prediction tracks the ground truth (black) across both fields, while the $50\%$, $70\%$, and $95\%$ confidence bands widen at high frequency local perturbations. 
% Table~\ref{tab:uq_sun} reports the calibration, CRPS, and sharpness metrics on the test dataset alongside the deterministic SHRED baseline RMSE. The uncertainty estimates closely follow the theoretical limit. 
% The model is generally more uncertain when the system undergoes significant, violent local changes. Figure~\ref{fig:uq_sun_calibration} shows the calibration diagrams. 

\begin{figure}[t]
\centering    
\includegraphics[width=\columnwidth]{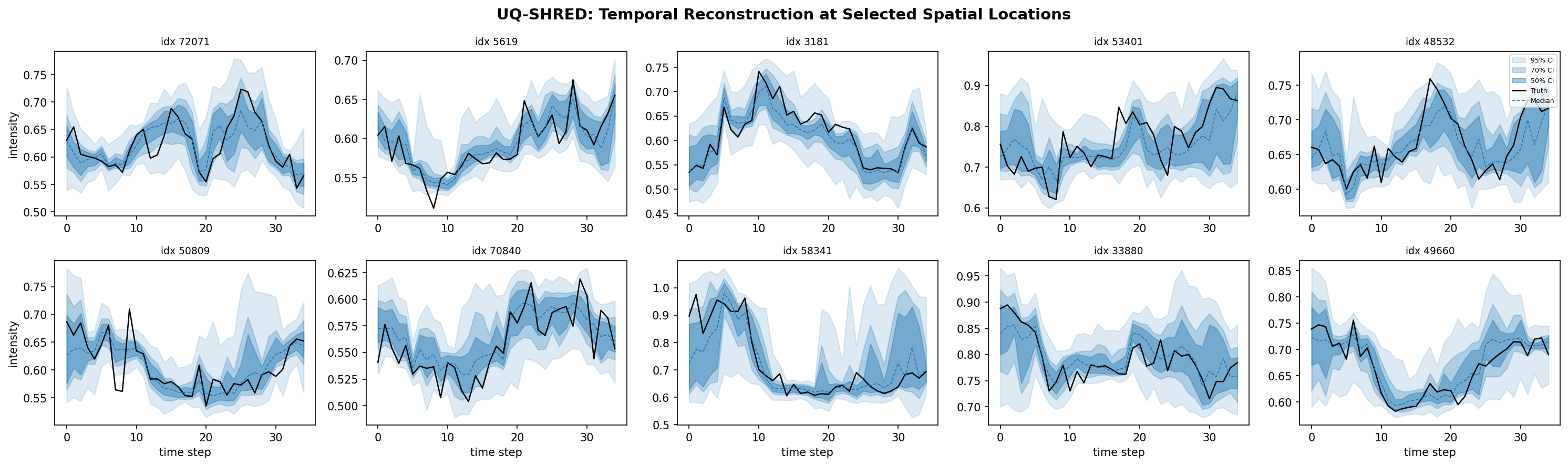}
\caption{UQ-SHRED reconstruction of the solar activity dataset. Shaded regions: $50\%$, $70\%$, and $95\%$ confidence intervals; blue dashed: median; black solid: ground truth.}
\label{fig:uq_sun_timeseries}
\end{figure}

% \begin{table}
% \centering
% \caption{UQ-SHRED metrics on the solar activity data. Calibration reports observed coverage (\%) at each nominal confidence level. CRPS represents the continuous ranked probability score. Sharpness is the average CI width.}
% \label{tab:uq_sun}
% \footnotesize
% \setlength{\tabcolsep}{3.5pt}
% \renewcommand{\arraystretch}{1.15}
% \begin{tabular}{lccccccccc}
% \toprule
%  \textbf{Method} & \textbf{RMSE} & \multicolumn{5}{c}{\textbf{Calibration \%} (expected to observed)} & & \multicolumn{2}{c}{\textbf{Sharpness}} \\
% \cmidrule(lr){2-2} \cmidrule(lr){3-7} \cmidrule(lr){9-10}
% & SUN & 50\% & 70\% & 90\% & 95\% & 99\% & \textbf{CRPS} & 95\% CI & 50\% CI \\
% \midrule
% UQ-SHRED & $0.029$ & 60.4 & 78.3 & 91.7 & 94.6 & 97.3 & $0.016$ & $0.139$ & $0.051$ \\
% \midrule
% SHRED & $0.014$ & \multicolumn{5}{c}{---} & --- & --- & --- \\
% \bottomrule
% \end{tabular}
% \end{table}

% \begin{figure}
% \centering
% \includegraphics[width=0.5\linewidth]{figures/uq_sun_calibration.png}
% \caption{Calibration diagrams for UQ-SHRED on solar activity data. Dashed line: perfect calibration. Blue: UQ-SHRED empirical coverage.}
% \label{fig:uq_sun_calibration}
% \end{figure}

\begin{figure}[t]
\centering
\begin{subfigure}[b]{0.23\columnwidth}
\includegraphics[width=\linewidth]{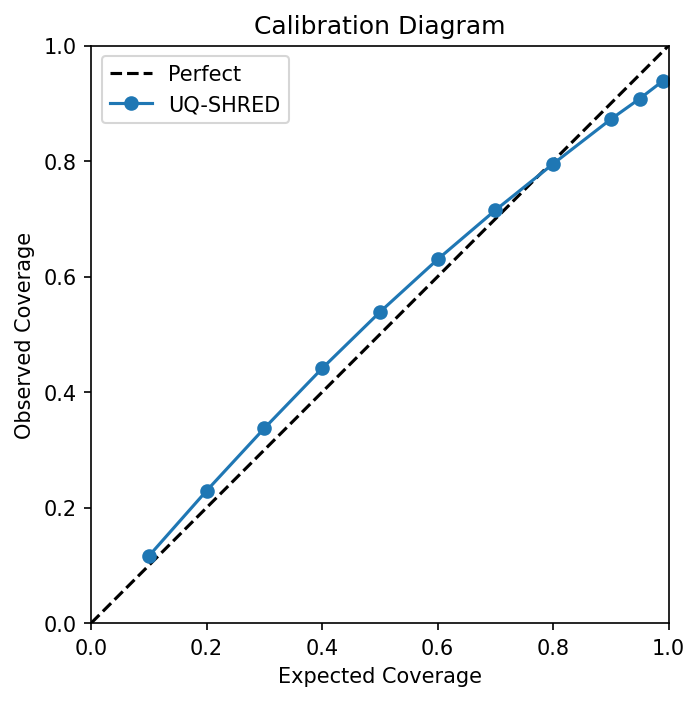}
\caption{SST}
\label{fig:uq_sst_calibration}
\end{subfigure}
\begin{subfigure}[b]{0.23\columnwidth}
\includegraphics[width=\linewidth]{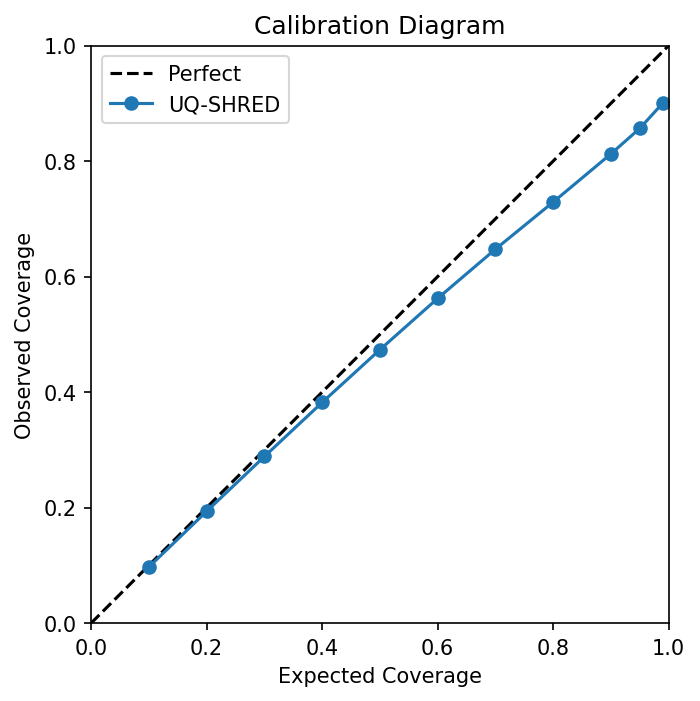}
\caption{Turbulent flow}
\label{fig:uq_iso_calibration}
\end{subfigure}
\begin{subfigure}[b]{0.23\columnwidth}
\includegraphics[width=\linewidth]{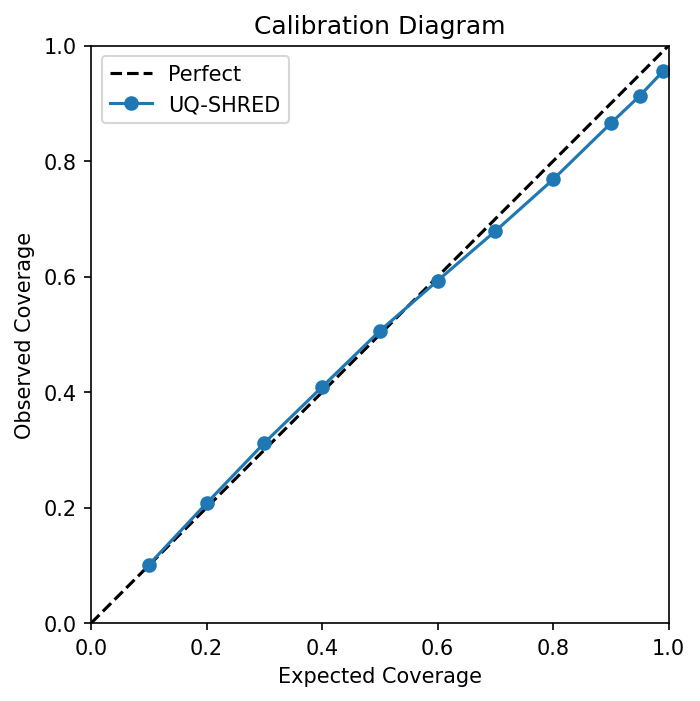}
\caption{Neural activity}
\label{fig:uq_neuro_calibration}
\end{subfigure}
\begin{subfigure}[b]{0.23\columnwidth}
\includegraphics[width=\linewidth]{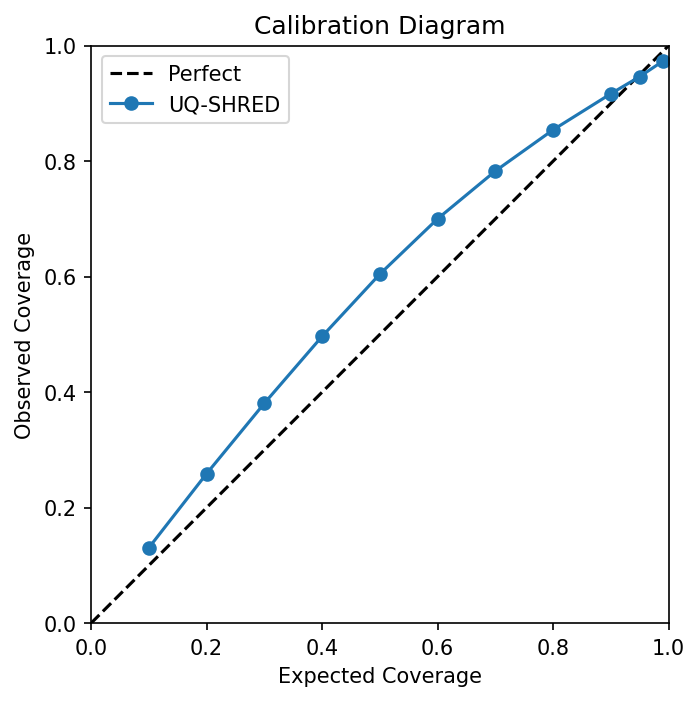}
\caption{Solar activity}
\label{fig:uq_sun_calibration}
\end{subfigure}
\caption{Calibration diagrams for UQ-SHRED across all four datasets. Dashed line: perfect calibration. Blue: UQ-SHRED empirical coverage.}
\label{fig:uq_calibration_all}
\end{figure}

\begin{table}[t]
\centering
\caption{UQ-SHRED metrics across the four experimental datasets. Calibration reports observed coverage (\%) at each nominal confidence level. CRPS represents the continuous ranked probability score. Sharpness is the average CI width.}
\label{tab:uq_metrics}
\footnotesize
\setlength{\tabcolsep}{3.5pt}
\renewcommand{\arraystretch}{1.15}
\begin{tabular}{lccccccccc}
\toprule
 \textbf{Dataset} & \textbf{RMSE} & \multicolumn{5}{c}{\textbf{Calibration \%} (expected to observed)} & & \multicolumn{2}{c}{\textbf{Sharpness}} \\
\cmidrule(lr){2-2} \cmidrule(lr){3-7} \cmidrule(lr){9-10}
 &  & 50\% & 70\% & 90\% & 95\% & 99\% & \textbf{CRPS} & 95\% CI & 50\% CI \\
\midrule
SST & 0.656 & 53.9 & 71.5 & 87.3& 90.8 & 93.9 & 0.343 & 2.791 & 0.890 \\
\midrule
ISO & 0.656 & 47.3 & 64.7 & 81.2& 85.7 & 90.1 & 0.032 & 0.209 & 0.067 \\
\midrule
Neural & $3.54e^{-5}$ & 50.6 & 67.9 & 86.6& 91.3 & 95.5 & $1.8e^{-5}$ & $1.14e^{-4}$ & $3.9e^{-5}$ \\
\midrule
Solar & $0.029$ & 60.4 & 78.3 & 91.7 & 94.6 & 97.3 & $0.016$ & $0.139$ & $0.051$ \\
\bottomrule
\end{tabular}
\end{table}

\paragraph{Summary across datasets.}

Table~\ref{tab:uq_metrics} reports the calibration, CRPS, and sharpness metrics for all four datasets. Across all experiments, the median prediction of UQ-SHRED tracks the ground truth closely. The observed coverages at the $50\%$, $70\%$, $90\%$, $95\%$, and $99\%$ nominal levels are consistent with well-calibrated predictive distributions, as shown in the calibration diagrams in Figure~\ref{fig:uq_calibration_all}. In each case, the confidence intervals are not uniformly wide but vary spatially and temporally: they widen where reconstruction ambiguity is highest and narrow where sensor data sufficiently constrains the state. Additional spatial reconstruction visualizations are provided in Appendix~\ref{app:spatial_figures}.

\begin{figure}[t]
\centering
\begin{subfigure}[b]{\columnwidth}
    \centering
    \includegraphics[width=\columnwidth]{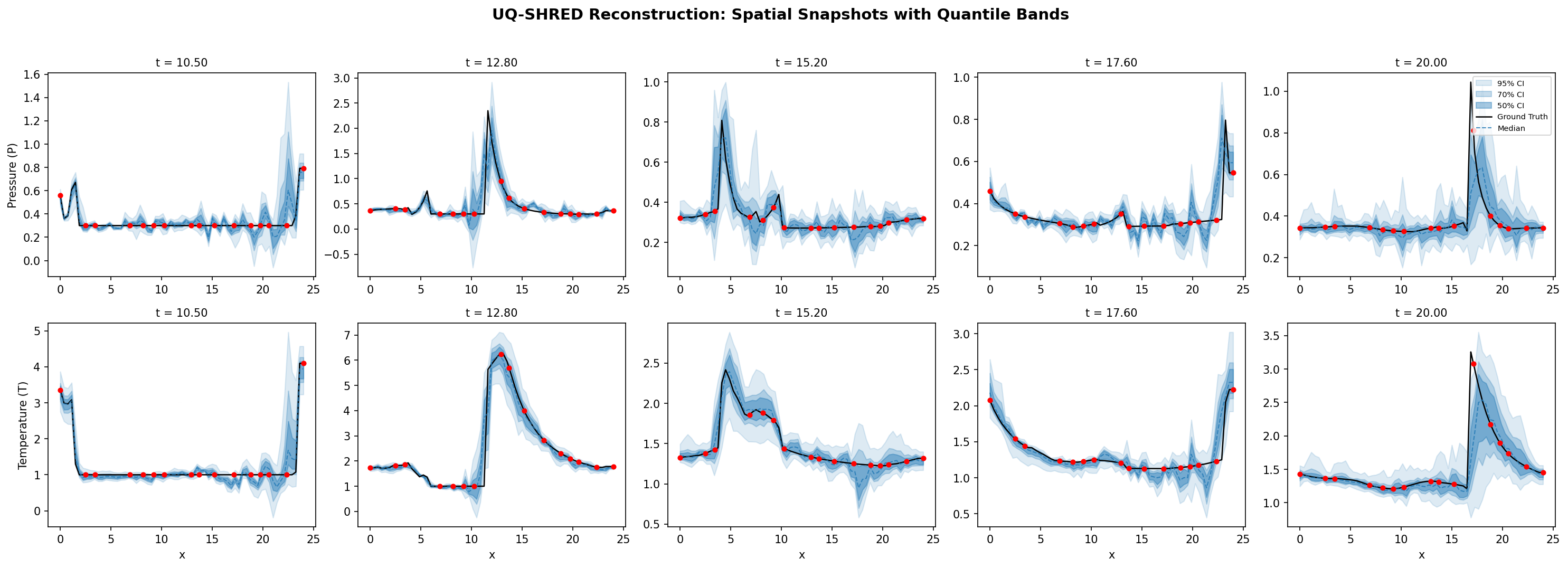}
    \caption{Closer trajectory (Run~0, small IC perturbation from training data).}
    \label{fig:uq_1drde_closer_snapshots}
\end{subfigure}
\vspace{0.3em}
\begin{subfigure}[b]{\columnwidth}
    \centering
    \includegraphics[width=\columnwidth]{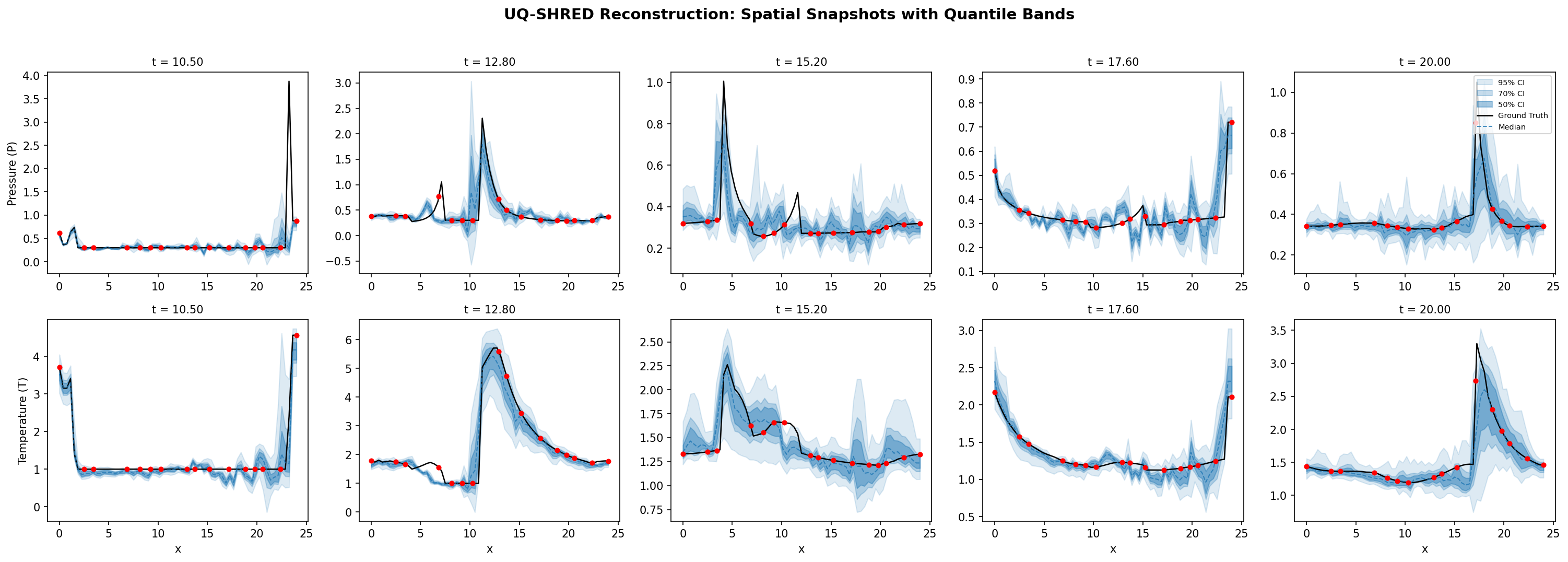}
    \caption{Further trajectory (Run~1, larger IC perturbation).}
    \label{fig:uq_1drde_further_snapshots}
\end{subfigure}
\caption{UQ-SHRED reconstruction of the 1D RDE transient stage at selected time instances. Top rows: pressure; bottom rows: temperature. Shaded regions: $50\%$, $70\%$, and $95\%$ confidence intervals; red dashed: median; black solid: ground truth; red dots: sensor locations.}
\label{fig:uq_1drde_snapshots}
\end{figure}

\subsection{1D RDE transient Stage}
\label{sec:uq_1drde}

\paragraph{Dataset}
We consider the ignition transient of a one-dimensional rotating detonation engine (RDE) generated by Koch's reduced-order model~\cite{koch2020modeling}, which approximates detonation-wave propagation in an annular combustor via the reactive Euler equations with source terms for injection, mixing, and finite-rate chemistry. The transient stage---in which the flow transitions from a quiescent premixed state to sustained detonation---produces sharp, rapidly evolving pressure and temperature fronts whose positions and amplitudes vary across initial conditions. We reconstruct both the pressure $P(x,t)$ and temperature $T(x,t)$ fields over the temporal window $t \in [10.0, 20.0]$ ($T = 101$ frames) on a downsampled spatial grid of $m_x = 64$ points.

We train UQ-SHRED from scratch on one ground-truth trajectory using the energy score loss, with $p = 16$ uniformly spaced sensors, noise dimension $d_\varepsilon = 50$, and $200$ Monte Carlo samples at inference. We evaluate on two held-out simulation trajectories that differ in the magnitude of their initial-condition perturbation from the training trajectory: a \emph{closer} trajectory (Run~0, small perturbation) and a \emph{further} trajectory (Run~1, larger perturbation). This pair tests how the learned uncertainty responds to distributional shift.

Figure~\ref{fig:uq_1drde_snapshots} presents spatial snapshots at selected time instances for both cases. The median prediction tracks the ground truth across both fields, while the $50\%$, $70\%$, and $95\%$ confidence bands widen at the detonation wave fronts---where sparse sensors provide the least constraint on the full-field reconstruction---and narrow in the quiescent post-detonation regions where the state is well-determined. Table~\ref{tab:uq_1drde} reports the calibration, CRPS, and sharpness metrics for both test cases alongside the deterministic SHRED baseline RMSE. Additional figures are provided in Appendix~\ref{app:uq_1drde}.

% \textbf{UQ-SHRED Configuration.}
% UQ-SHRED is trained from scratch on one ground-truth trajectory using the energy score loss, with $p = 16$ uniformly spaced sensors and a noise dimension of $d_\varepsilon = 50$. At inference, $200$ independent noise draws yield a sample ensemble from which pointwise quantiles are computed. A deterministic SHRED baseline trained with MSE loss provides a reconstruction-accuracy reference.

% \textbf{Results.}
% We evaluate UQ-SHRED on two held-out ensemble members that differ in the magnitude of their initial-condition perturbation from the training trajectory: a \emph{closer} trajectory (Run~0, small perturbation) and a \emph{further} trajectory (Run~1, larger perturbation). This pair tests how the learned uncertainty responds to distributional shift.

% Figure~\ref{fig:uq_1drde_snapshots} presents spatial snapshots at selected time instances for both cases. The median prediction (red dashed) tracks the ground truth (black) across both fields, while the $50\%$, $70\%$, and $95\%$ confidence bands widen at the detonation wave fronts---where sparse sensors provide the least constraint on the full-field reconstruction---and contract in the quiescent post-detonation regions where the state is well-determined.

\begin{figure}[t]
\centering
\begin{subfigure}[b]{0.48\columnwidth}
    \centering
    \includegraphics[width=\linewidth]{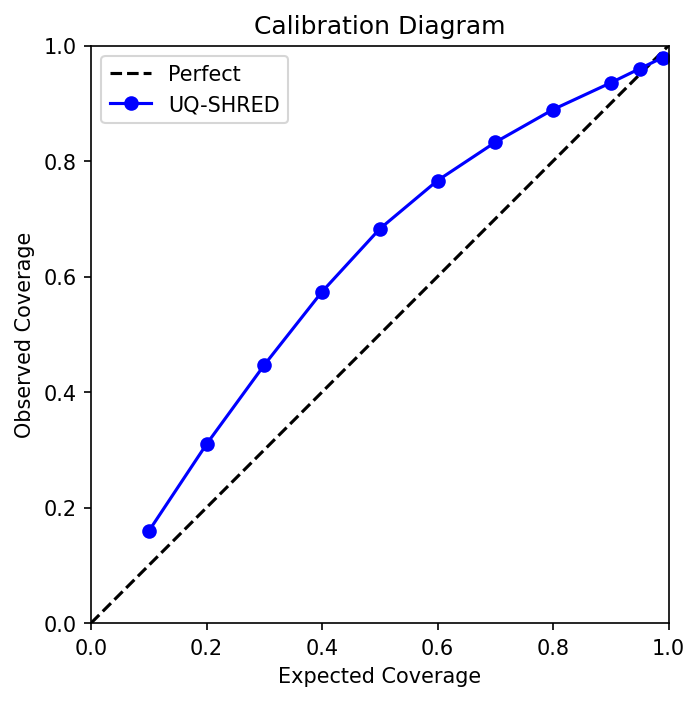}
    \caption{Closer trajectory (Run~0).}
\end{subfigure}
\hfill
\begin{subfigure}[b]{0.48\columnwidth}
    \centering
    \includegraphics[width=\linewidth]{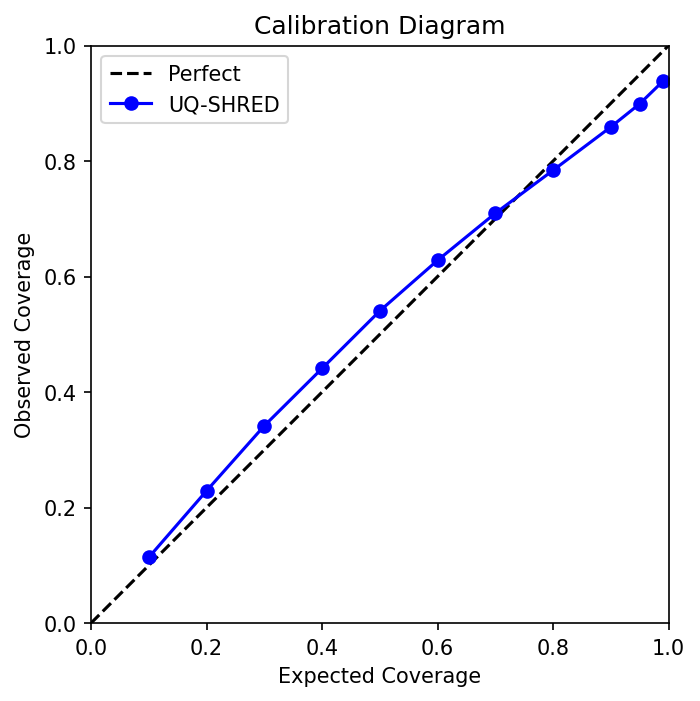}
    \caption{Further trajectory (Run~1).}
\end{subfigure}
\caption{Calibration diagrams for UQ-SHRED on the 1D RDE transient stage. Dashed line: perfect calibration. Blue: UQ-SHRED empirical coverage.}
\label{fig:uq_1drde_calibration}
\end{figure}

\begin{table}[t]
\centering
\caption{UQ-SHRED metrics on the 1D RDE transient stage for two test trajectories with different initial-condition perturbation magnitudes. Calibration reports observed coverage (\%) at each nominal confidence level. CRPS represents the continuous ranked probability score. Sharpness is the average CI width.}
\label{tab:uq_1drde}
\footnotesize
\setlength{\tabcolsep}{3.5pt}
\renewcommand{\arraystretch}{1.15}
\begin{tabular}{lcccccccccc}
\toprule
 & \multicolumn{2}{c}{\textbf{RMSE (median)}} & \multicolumn{5}{c}{\textbf{Calibration \%} (expected to observed)} & & \multicolumn{2}{c}{\textbf{Sharpness}} \\
\cmidrule(lr){2-3} \cmidrule(lr){4-8} \cmidrule(lr){10-11}
\textbf{Test case} & $P$ & $T$ & 50\% & 70\% & 90\% & 95\% & 99\% & \textbf{CRPS} & 95\% CI & 50\% CI \\
\midrule
Closer (Run~0) & 0.156 & 0.307 & 69.5 & 83.5 & 93.6 & 95.9 & 98.0 & 0.070 & 0.556 & 0.185 \\
Further (Run~1) & 0.194 & 0.398 & 56.9 & 72.9 & 88.0 & 91.5 & 95.1 & 0.096 & 0.569 & 0.190 \\
\midrule
\multicolumn{11}{l}{\textit{Deterministic SHRED baseline:}} \\
Closer (Run~0) & 0.145 & 0.274 & \multicolumn{5}{c}{---} & --- & --- & --- \\
Further (Run~1) & 0.198 & 0.389 & \multicolumn{5}{c}{---} & --- & --- & --- \\
\bottomrule
\end{tabular}
\end{table}

\begin{figure}[t]
\centering
\includegraphics[width=\columnwidth]{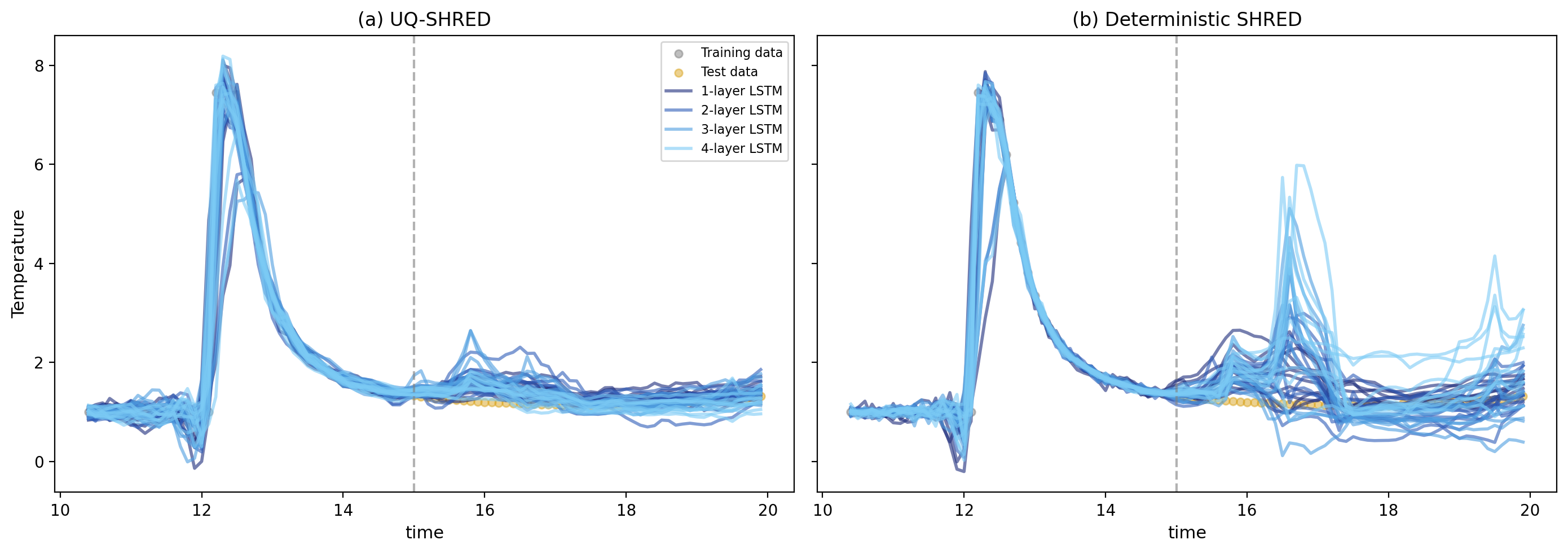}
\caption{Stability comparison between UQ-SHRED and deterministic SHRED under temporal extrapolation on the 1D RDE data. Visualizations are presented for temporal traces at a selected spatial location $x=14.50$ on the temperature field.}
\label{fig:uq_vs_det_stability}
\end{figure}

To further investigate the stability of UQ-SHRED under temporal extrapolation, we conduct an experiment in which both UQ-SHRED and deterministic SHRED are trained on the first portion of the ground-truth trajectory ($t \leq 15.0$) and evaluated on the remaining unseen portion ($t > 15.0$). Both methods utilize the same SHRED-architecture with number of LSTM layers varying from 1 to 4, with 10 independent random initializations for each architecture. Figure~\ref{fig:uq_vs_det_stability} presents the visualizations: both methods produce comparable reconstructions within the training window, but beyond the training boundary (dashed vertical line), the deterministic SHRED predictions diverge substantially across initializations---the variance is on average $3.30\times$ larger than that of UQ-SHRED. This behavior parallels the findings of Shen and Meinshausen~\cite{shen2023engression}, who demonstrated that engression produces stable out-of-support predictions where standard regression with identical architectures yields highly variable extrapolations. In the present setting, the energy score loss regularizes the learned mapping so that different initializations converge to consistent predictive distributions, whereas the MSE objective---which is unconstrained beyond the training support---permits each initialization to extrapolate differently. The result suggests that engression not only provides calibrated uncertainty estimates but also confers improved robustness when extrapolating beyond the training time window.

Figure~\ref{fig:uq_1drde_calibration} shows the calibration diagrams with physically interpretable behavior: the model's learned uncertainty bandwidth, calibrated on data near the training distribution, becomes slightly insufficient when wavefront positions shift substantially.

\begin{figure*}[!t]
\centering
\begin{subfigure}[b]{\columnwidth}
\centering
    \includegraphics[width=0.9\columnwidth, height=3.8cm, keepaspectratio]{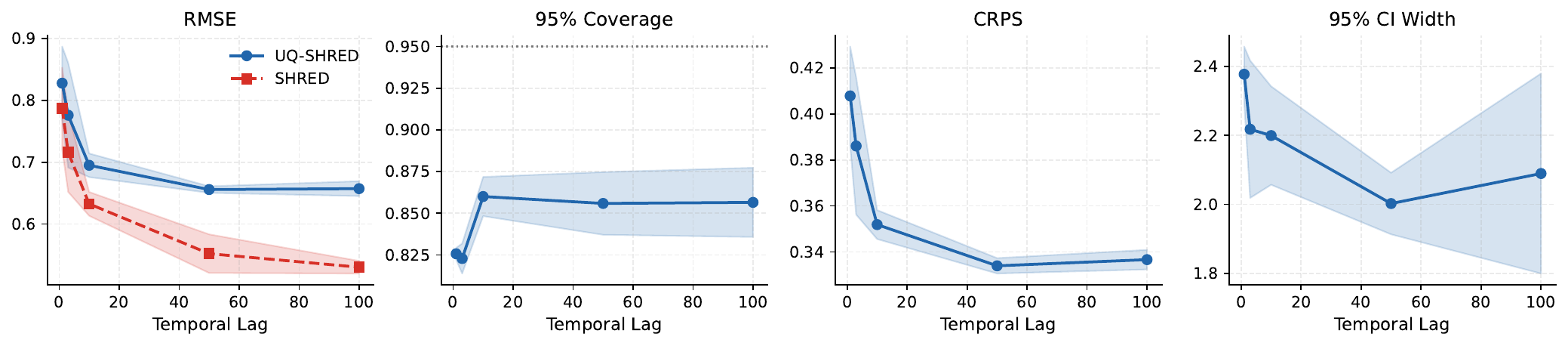}
    \caption{Effect of temporal lag on UQ-SHRED.}
    \label{fig:ablation_temporal}
\end{subfigure}
\begin{subfigure}[b]{\columnwidth}
\centering
    \includegraphics[width=0.9\columnwidth, height=3.8cm, keepaspectratio]{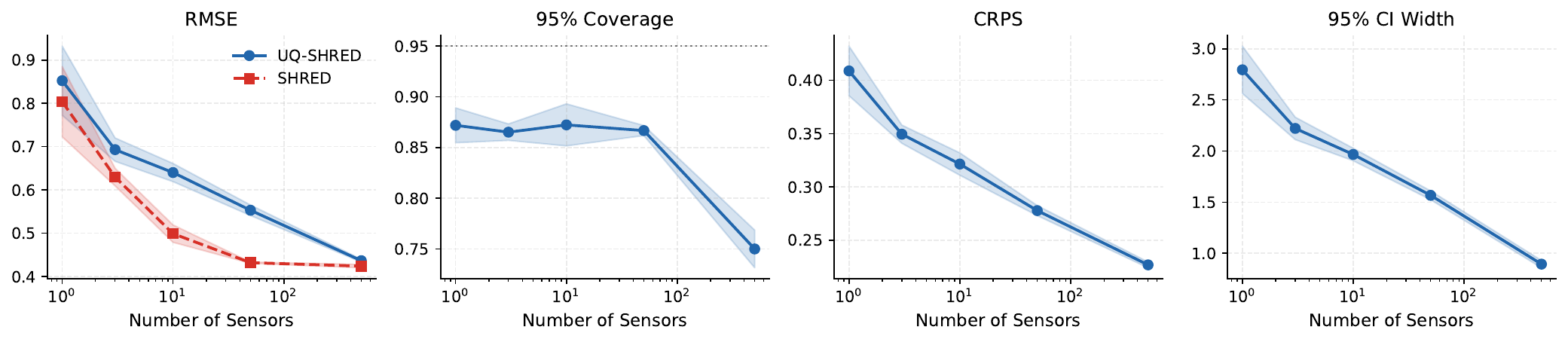}
    \caption{Effect of number of sensors on UQ-SHRED.}
    \label{fig:ablation_num_sensors}
\end{subfigure}
\begin{subfigure}[b]{\columnwidth}
\centering
    \includegraphics[width=0.9\columnwidth, height=3.8cm, keepaspectratio]{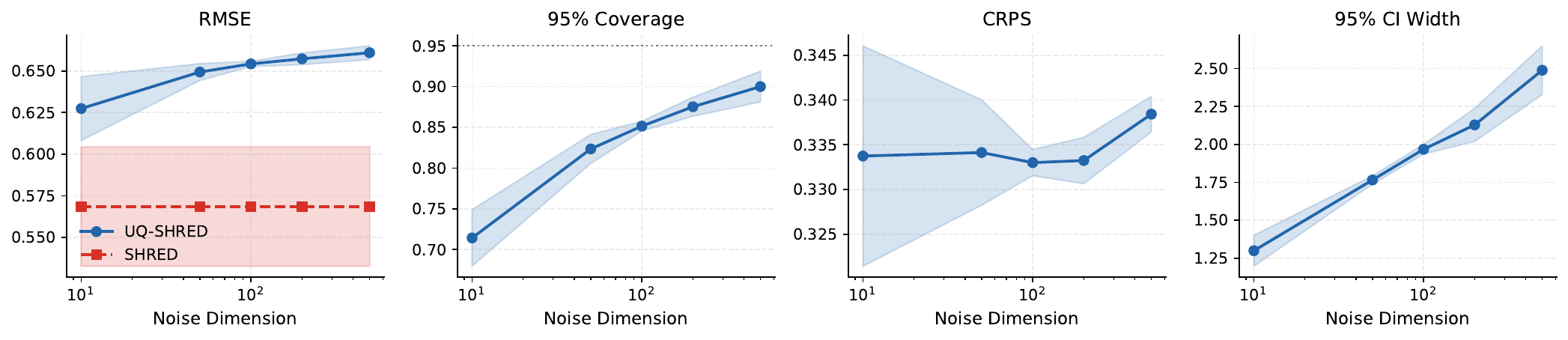}
    \caption{Effect of the dimension of $d_{\varepsilon}$ on UQ-SHRED.}
    \label{fig:ablation_noise_dim}
\end{subfigure}
\begin{subfigure}[b]{\columnwidth}
\centering
    \includegraphics[width=0.9\columnwidth, height=3.8cm, keepaspectratio]{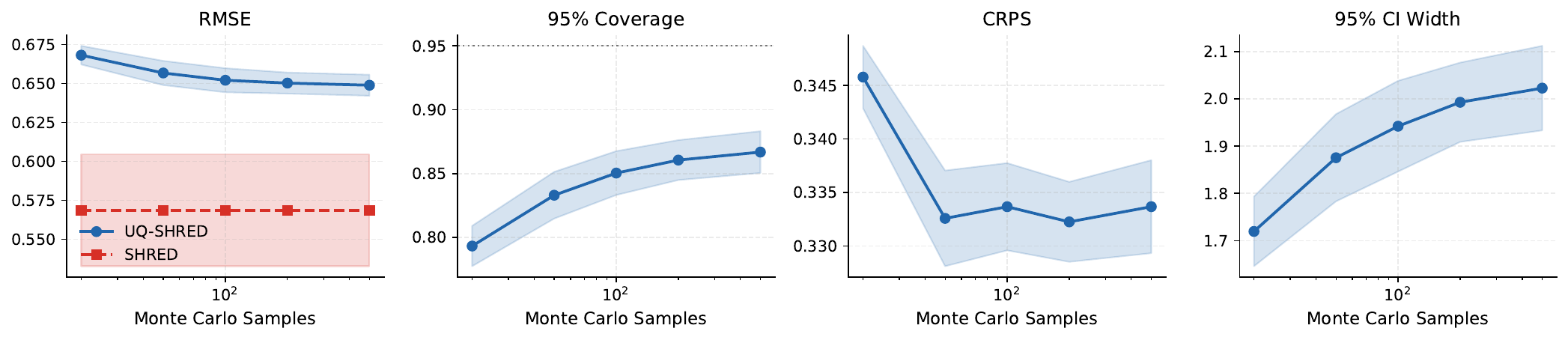}
    \caption{Effect of Monte Carlo sample size on UQ-SHRED.}
    \label{fig:ablation_n_samples}
\end{subfigure}
\begin{subfigure}[b]{\columnwidth}
\centering
    \includegraphics[width=0.9\columnwidth, height=3.8cm, keepaspectratio]{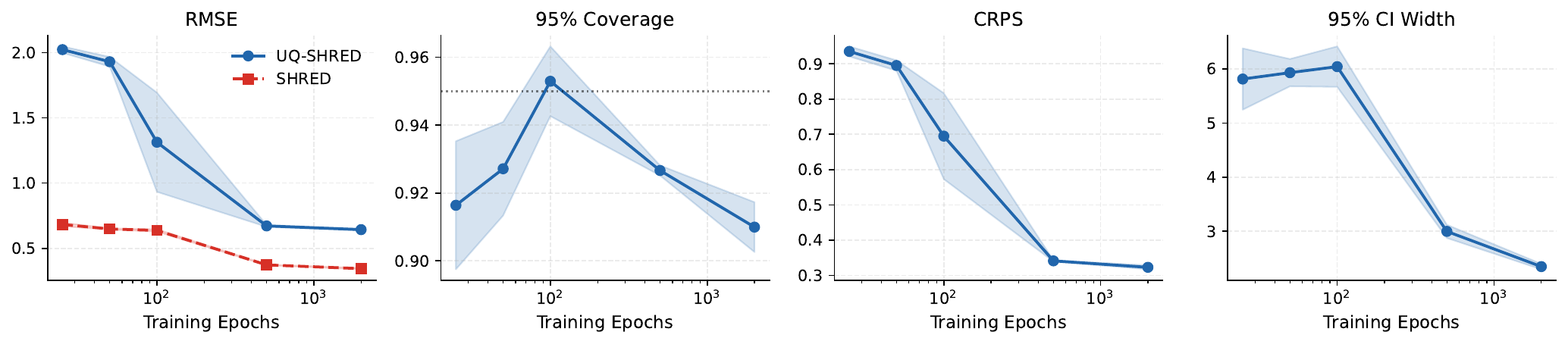}
    \caption{Effect of the number of training epochs on UQ-SHRED.}
    \label{fig:ablation_num_epochs}
\end{subfigure}
%\caption{Ablation study on SST data}
\end{figure*}

\subsection{Ablation study on SST data}
\label{sec:ablation}

As is common in evaluating neural network architectures, ablation studies are important in order to help understand the contribution of individual components to the model's overall performance. The ablation study performed here is critical for building interpretability and trust in the UQ-SHRED architecture while  allowing for more efficient model design by revealing what can be simplified without sacrificing accuracy.

\paragraph{Temporal lag parameter}

In Figure~\ref{fig:ablation_temporal}, we demonstrate how the temporal lag parameter will affect UQ-SHRED training and uncertainty quantification. Takens' embedding theorem guarantees that a sufficiently long temporal history provides a faithful embedding of the underlying attractor. Consistent with this, both SHRED and UQ-SHRED exhibit clear RMSE reduction as the lag increases, confirming the importance of temporal history for sparse sensing. We further observe that uncertainty coverage and confidence interval width are relatively insensitive to the temporal lag, particularly beyond $L=10$.

\paragraph{Sensor number}

In Figure~\ref{fig:ablation_num_sensors}, we demonstrate how the number of sensors will affect UQ-SHRED's performance. For sparse sensing problems, it is clear that with more sensors, the spatial prediction problem will be less challenging. 
This is supported by the numerical experiments that the testing loss is significantly reduced with more sensors. However, it is also important to notice from the $95\%$ coverage plot that the coverage performance is not improving much with additional sensors; it is likely that we need to increase the noise dimension $d_{\varepsilon}$ for larger input dimensions to address the uncertainty in a higher-dimensional space.

\paragraph{Noise dimension}

In Figure~\ref{fig:ablation_noise_dim}, we show how noise dimension can affect UQ-SHRED coverages. In general, a larger noise dimension helps the model better capture the uncertainty. 
In practice, we suggest the users to specify the noise dimension $d_{\varepsilon}$ scales with the size of input data. Having a significantly large noise dimension could help, but will also significantly increase the computational burden. 

\paragraph{Monte Carlo sample size}

In Figure~\ref{fig:ablation_n_samples}, we show how Monte Carlo sample size during inference can affect UQ-SHRED coverages. In theory, having more Monte Carlo samples will reduce the bias of the estimate of the quantiles, which is empirically verified here in the $95\%$ coverage plot. 
A common guideline is to have at least $100$ Monte Carlo samples for valid quantile estimation.

\paragraph{Epochs}

In Figure~\ref{fig:ablation_num_epochs}, we show an important observation that UQ-SHRED is able to quantify uncertainty for both overfitted and underfitted models. 
When the number of training epochs are small, the model is not learning enough information of the system. UQ-SHRED is still able to provide statistically valid uncertainty estimates which has an average of $91.8\%$ empirical coverage for $95\%$ quantile. When the model is well-trained, it is still able to provide a valid estimate with an average of $91.6\%$. 
Figure~\ref{fig:ablation_num_epochs} also suggests that applying early stopping methods can be a helpful technique for uncertainty quantification, which essentially avoids the model to overfit onto the noise.

\section{Conclusion}
\label{sec:conclusion}

In this paper, we introduce UQ-SHRED, a distributional learning framework for sparse sensing problems that enables valid uncertainty estimation. This stochastic modeling of uncertainty amends the deterministic reconstruction framework for regimes where the system is partially observed, stochastic, and/or contains high-frequency dynamics that are not captured by the available sensor data.
Inspired by engression~\cite{shen2023engression}, we concatenate a resampled Gaussian noise vector to the sensor input, and incorporate the energy score loss during training. The resulting model retains the computational efficiency while producing a distributional estimate at each spatiotemporal location through Monte Carlo resampling. Because the energy score is a strictly proper scoring rule, the population optimum is attained when the predictive distribution matches the true conditional distribution under technical assumptions in Thm.~\ref{thm:uq_shred_population}, and the quantile estimation converges to the true quantiles with large Monte Carlo samples (as demonstrated in Thm.~\ref{thm:monte_carlo}). 

The experimental study spans five datasets drawn from distinct scientific domains including climate science, computational fluid dynamics, neuroscience, astrophysics, and propulsion physics. These systems differ substantially in their spatiotemporal complexity, dimensionality, noise characteristics, and the degree to which sparse sensors constrain the full-field reconstruction. Across these diverse settings, UQ-SHRED consistently produces well-calibrated confidence intervals while maintaining reconstruction accuracy comparable to the deterministic SHRED baseline. The calibration diagrams show that the observed coverages closely follow the diagonal across all target confidence levels. More importantly, the confidence intervals exhibit spatially and temporally varying width corresponding to the true underlying complexity of the physical interactions. Specifically, the uncertainty estimates typically widen in regions of higher reconstruction ambiguity such as detonation wave fronts in the RDE, turning points in the SST seasonal cycle, and rapid transients in neural recordings. 
The uncertainty estimation tends to be narrow where the sensor data sufficiently constrains the reconstruction. 
These observations all indicate that the learned predictive distribution faithfully reflects the information provided by the sparse sensor measurements.

The ablation study on SST data (Section~\ref{sec:ablation}) reveals further insights in practice. Temporal lag primarily affects reconstruction accuracy through Takens's embedding theorem but has limited influence on calibration once a minimal embedding dimension is reached. The noise dimension $d_\varepsilon$ should scale with the input dimensionality to ensure sufficient expressive capacity for the stochastic mapping; a very small noise dimension will lead to underdispersed predictions, while excessively large dimensions increase computational cost without proportional benefit.  Most notably, UQ-SHRED maintains valid uncertainty estimates across a wide range of training epochs, including both undertrained and overtrained regimes, suggesting that the energy score loss provides robustness to model misspecification that could be absent from standard point-estimate training.

We note several limitations of the current framework, which can be improved by future works. First, the calibration guarantees of the energy score hold at the population optimum; with finite training data, the learned distribution may exhibit conservative or overconfident behavior depending on the dataset size and complexity, as observed in the RDE experiment where training on a single trajectory could lead to mild miscalibration under strong  distributional shift. Additionally, the current framework does not provide coverage guarantees for finite samples, which can be corrected by conformal calibration methods~\cite{lei2021conformal,barber2023conformal,huang2023uncertainty,gibbs2021adaptive}. 

\section*{Code and Data Availability}
Our codebase is available at \url{https://github.com/gaoliyao/uq_shred}.

\section*{Acknowledgments}
The authors were supported in part by the US National Science Foundation (NSF) AI Institute for Dynamical Systems (dynamicsai.org), grant 2112085.  JNK further acknowledges support from the Air Force Office of Scientific Research (FA9550-24-1-0141).
% ============================================================================
\bibliography{refs}
\bibliographystyle{plain}

\newpage
\appendix

% \section{Proofs and Theoretical Analysis}
% \label{app:theory}
% % TODO

\section{Additional Figures}
\label{app:additional_figures}

% \section{Experimental Details}
% \label{app:experiments}

\paragraph{Spatial reconstruction visualizations}
\label{app:spatial_figures}

Figures~\ref{fig:uq_sst_spacetime}--\ref{fig:uq_1drde_spacetime} present spatial reconstruction visualizations for the first four experimental datasets. Each figure displays, from left to right: the ground-truth field, the UQ-SHRED median reconstruction, the absolute error, and the $95\%$ CI width, at selected time steps. In all cases, the spatial distribution of the $95\%$ CI width closely tracks the spatial distribution of reconstruction error, indicating that UQ-SHRED assigns higher uncertainty where the reconstruction is less accurate.

\begin{figure}[t]
\centering
\includegraphics[width=\columnwidth]{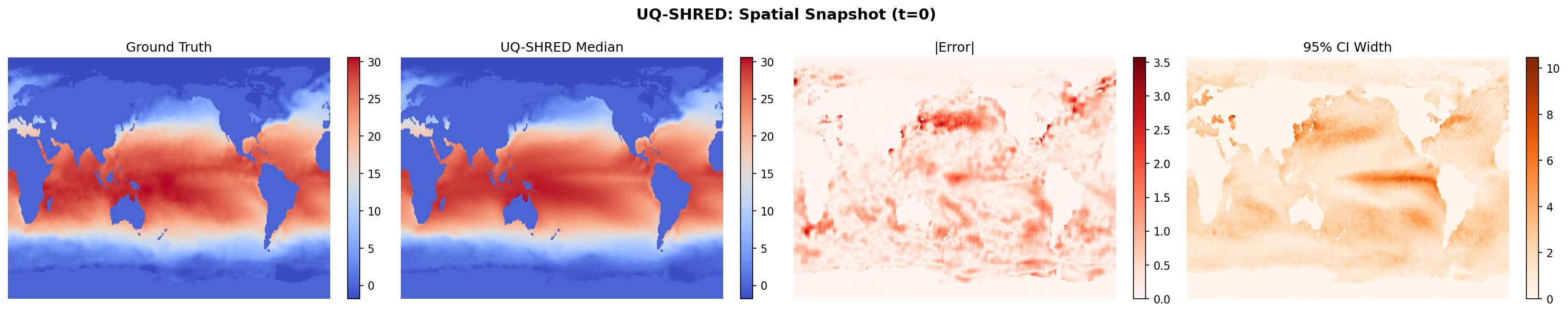}
\caption{Spatial reconstruction of UQ-SHRED on SST data. Columns: ground truth, median reconstruction, absolute error, and $95\%$ CI width. }
\label{fig:uq_sst_spacetime}
\end{figure}

\begin{figure}[t]
\centering
\includegraphics[width=\columnwidth]{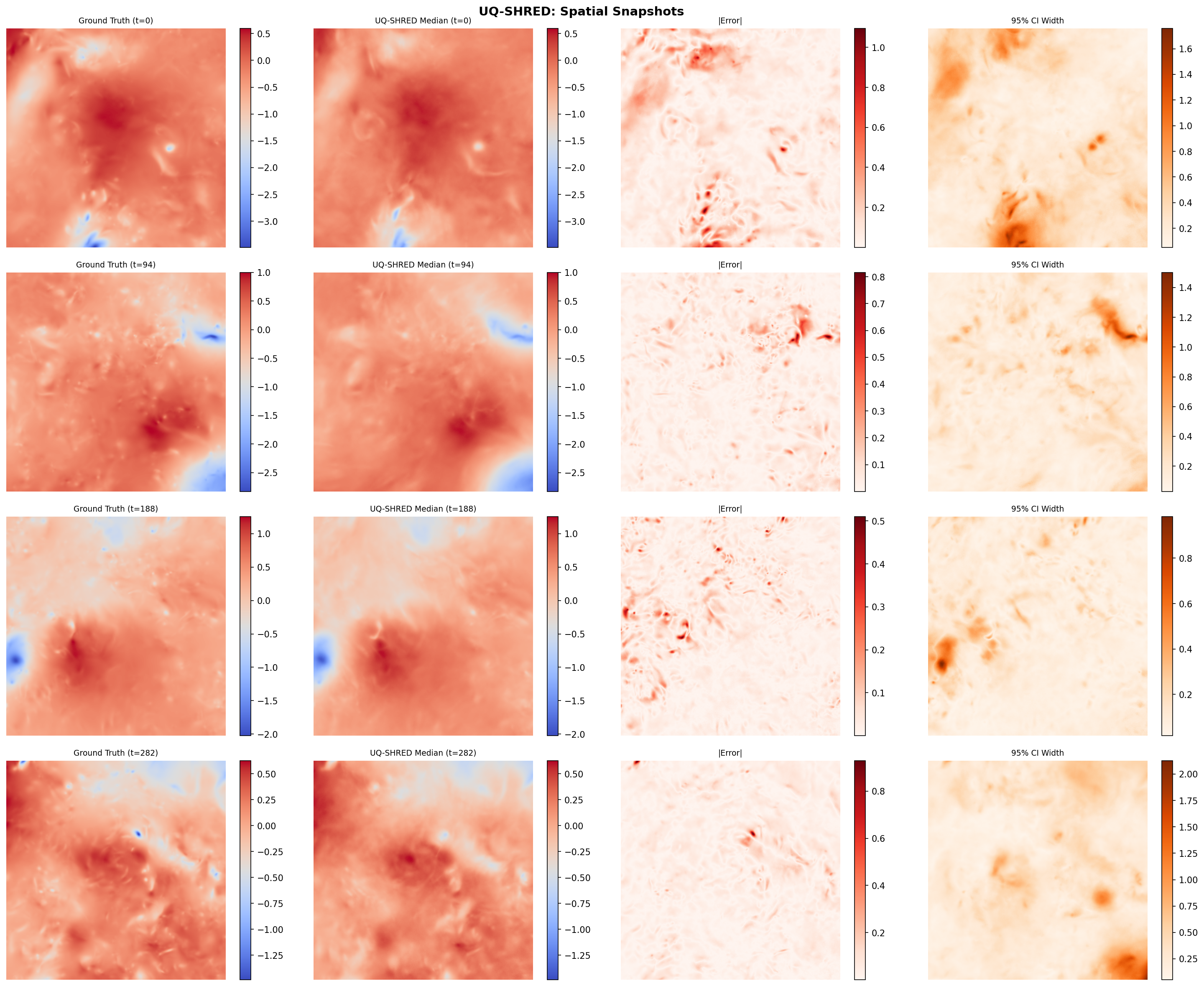}
\caption{Spatial reconstruction of UQ-SHRED on isotropic turbulent flow data. Columns: ground truth, median reconstruction, absolute error, and $95\%$ CI width. }
\label{fig:uq_iso_spatial}
\end{figure}

\begin{figure}[t]
\centering
\includegraphics[width=\columnwidth]{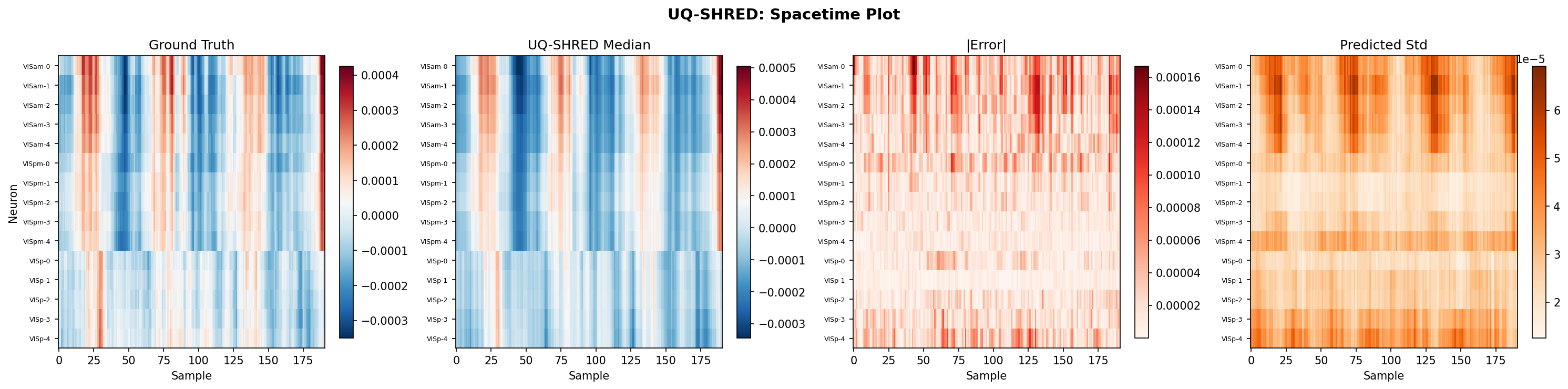}
\caption{Spatial reconstruction of UQ-SHRED on neuro activity data. Columns: ground truth, median reconstruction, absolute error, and $95\%$ CI width. }
\label{fig:uq_neuro_spacetime}
\end{figure}

\begin{figure}[t]
\centering
\includegraphics[width=\columnwidth]{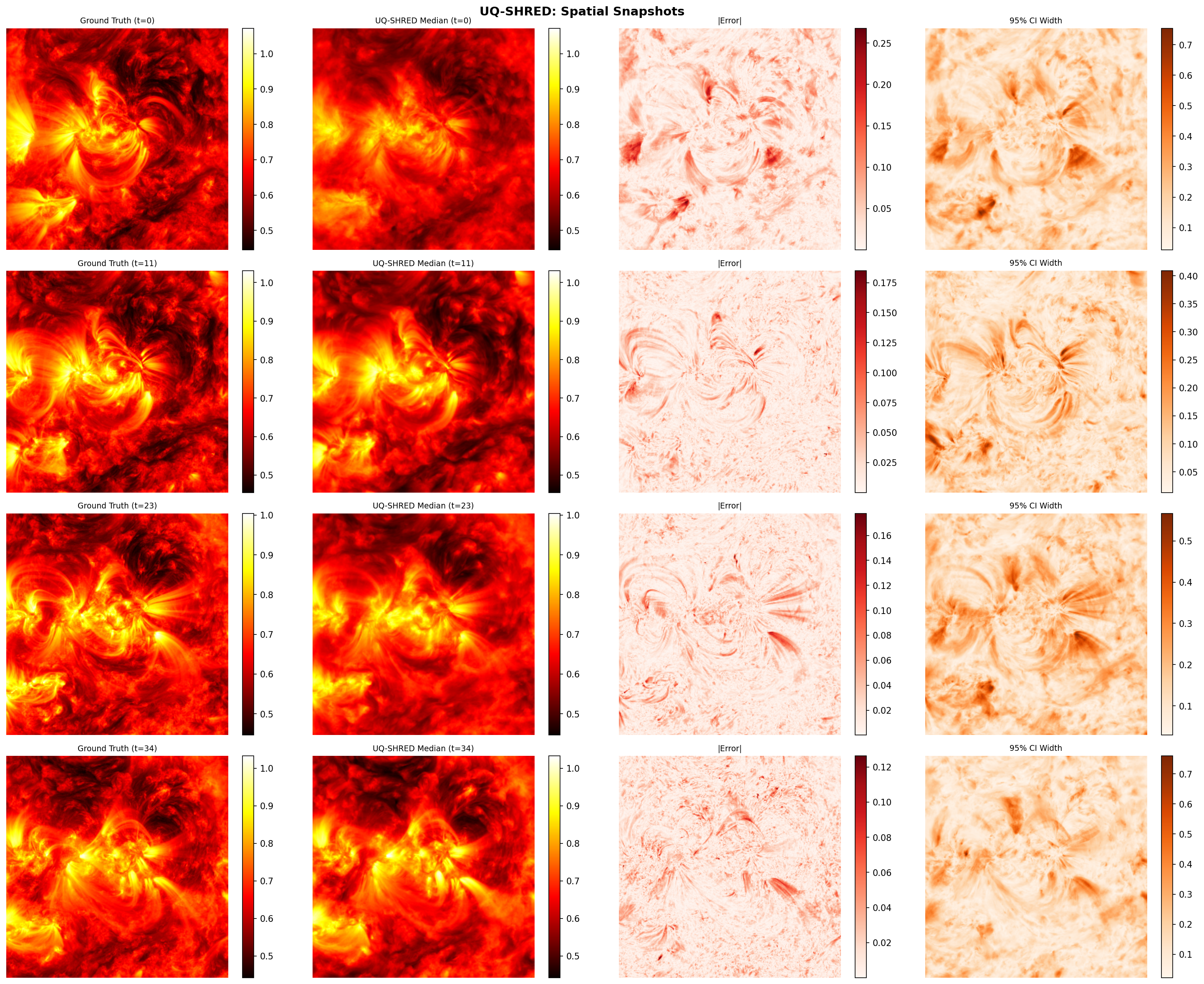}
\caption{Spatial reconstruction of UQ-SHRED on solar activity data. Columns: ground truth, median reconstruction, absolute error, and $95\%$ CI width. }
\label{fig:uq_sun_spatial}
\end{figure}

\begin{figure}[t]
\centering
\begin{subfigure}[b]{\columnwidth}
    \centering
    \includegraphics[width=\columnwidth]{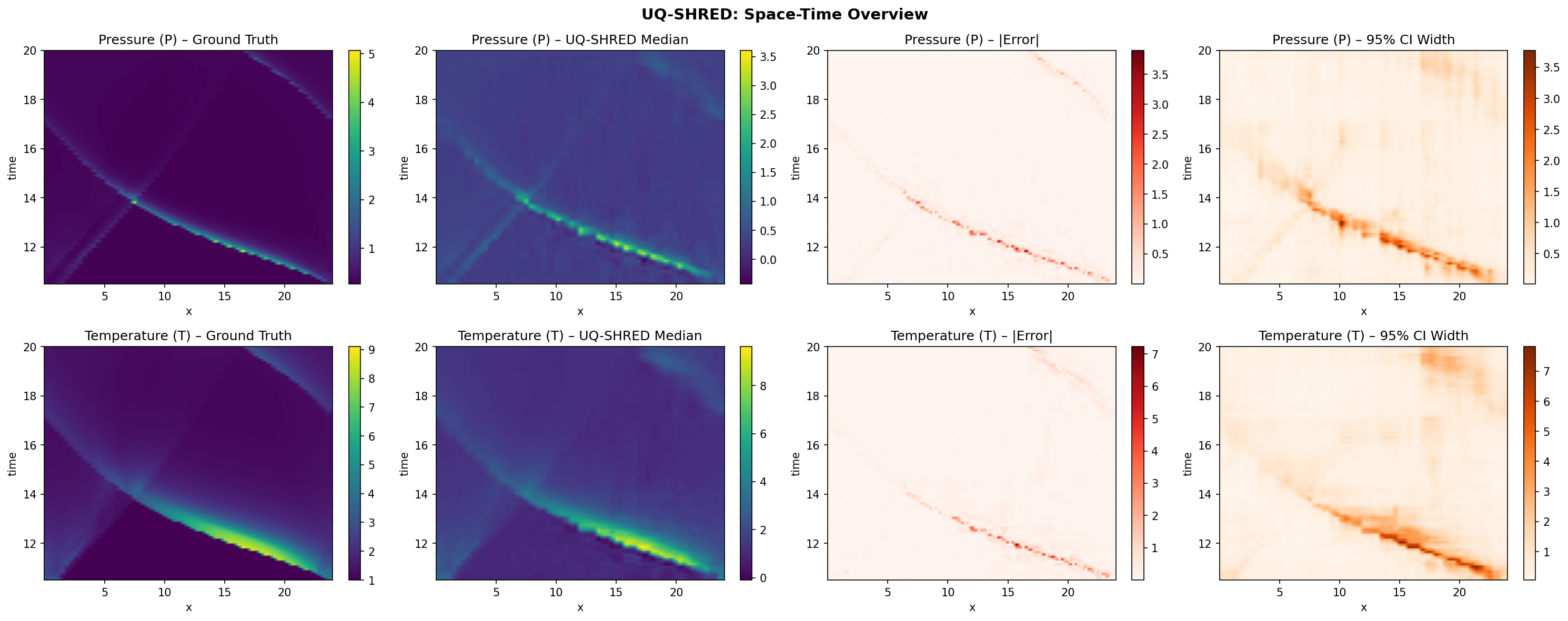}
    \caption{Closer trajectory (Run~0).}
\end{subfigure}
\vspace{0.3em}
\begin{subfigure}[b]{\columnwidth}
    \centering
    \includegraphics[width=\columnwidth]{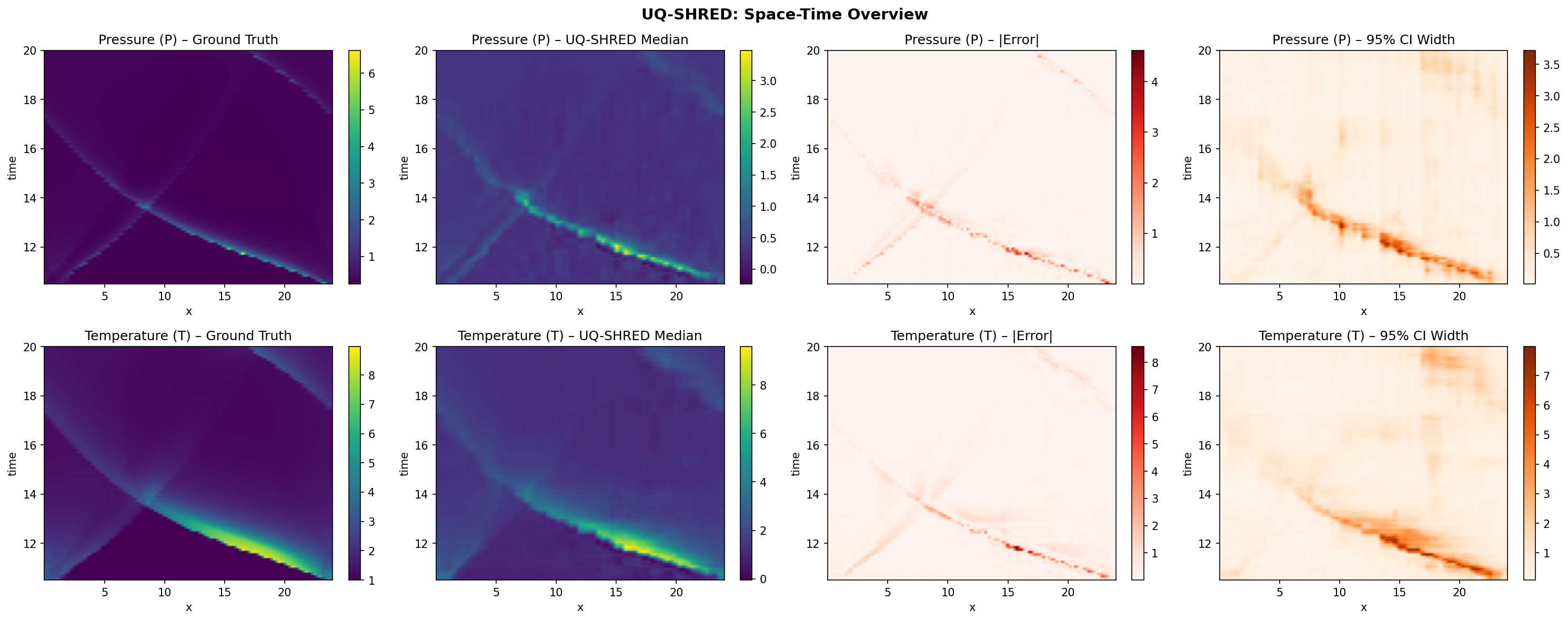}
    \caption{Further trajectory (Run~1).}
\end{subfigure}
\caption{Spacetime overview for UQ-SHRED on the 1D RDE transient ignition stage. Columns: ground truth, median reconstruction, absolute error, and $95\%$ CI width. Top rows: pressure; bottom rows: temperature. The $95\%$ CI width concentrates along the detonation fronts, matching the spatial distribution of reconstruction error.}
\label{fig:uq_1drde_spacetime}
\end{figure}

\paragraph{1D RDE transient stage: additional figures}
\label{app:uq_1drde}

Figure~\ref{fig:uq_1drde_timeseries} presents temporal traces at selected spatial locations for both the closer and further RDE test trajectories. The confidence bands widen during detonation-wave passage and narrow during quiescent intervals, consistent with the spatial snapshots shown in Figure~\ref{fig:uq_1drde_snapshots}.

\begin{figure}[t]
\centering
\begin{subfigure}[b]{\columnwidth}
    \centering
    \includegraphics[width=\columnwidth]{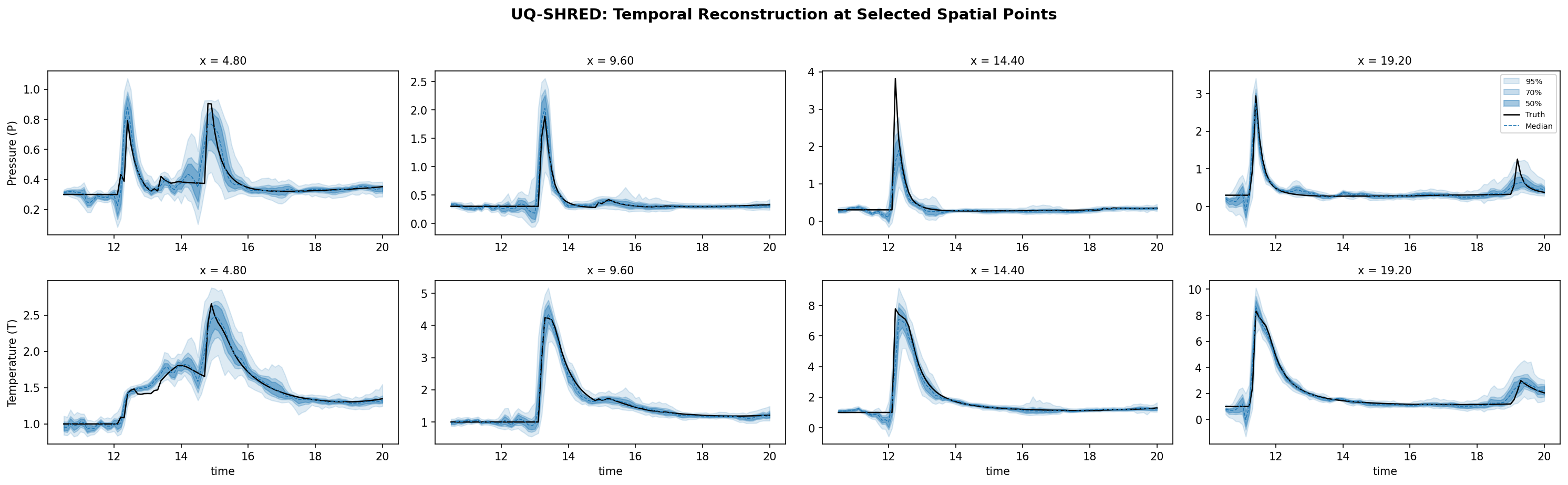}
    \caption{Closer trajectory (Run~0).}
\end{subfigure}
\vspace{0.3em}
\begin{subfigure}[b]{\columnwidth}
    \centering
    \includegraphics[width=\columnwidth]{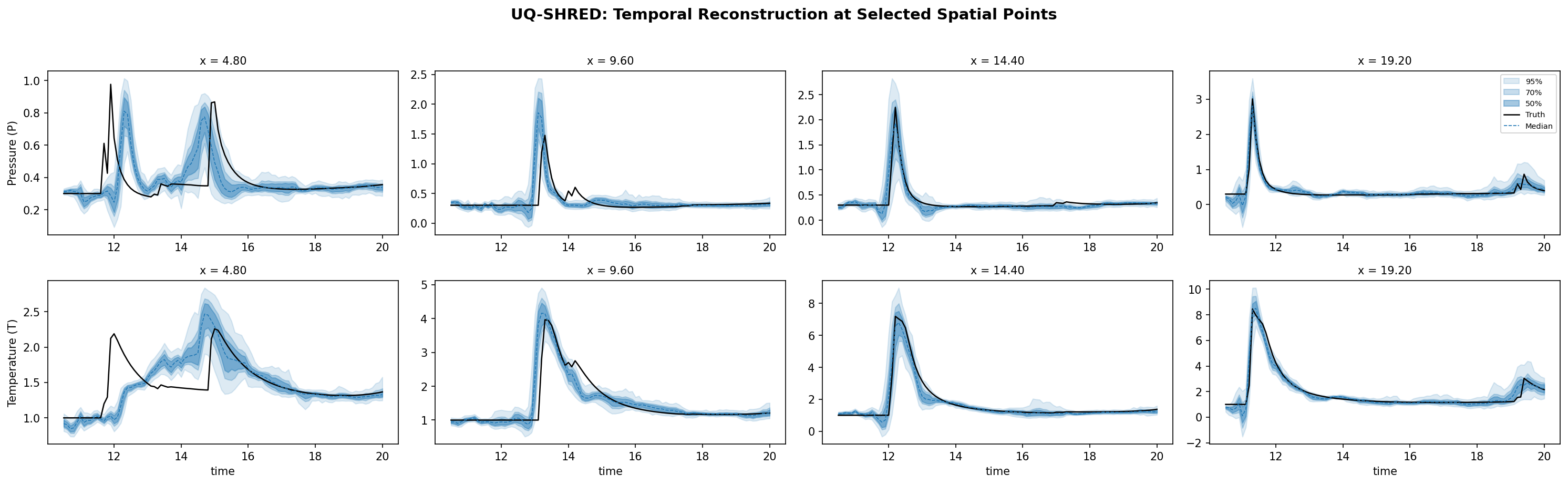}
    \caption{Further trajectory (Run~1).}
\end{subfigure}
\caption{Temporal traces at selected spatial locations for UQ-SHRED on the 1D RDE transient ignition stage. Confidence bands widen during detonation-wave passage and contract during quiescent intervals.}
\label{fig:uq_1drde_timeseries}
\end{figure}

% \section{Additional Figures}
% \label{app:additional_figures}
% % TODO

\end{document}

%% file: def.tex
\usepackage{amsmath,amssymb,amsfonts,amsthm}

\newcommand*\E[1]{\mathbb{E}\left[#1\right]}
\newcommand*\Ep[2]{\mathbb{E}_{#1}\left[#2\right]}

\newcommand{\vx}{\mathbf{x}}
\newcommand{\vy}{\mathbf{y}}

\newcommand*\lrn[1]{\left\|#1\right\|}

\newtheorem{theorem}{Theorem}
\newtheorem{corollary}{Corollary}

\newtheorem{remark}{Remark}

%% file: arxiv/math_commands.tex
\usepackage{amsmath,amsfonts,bm}

\def\eqref#1{equation~\ref{#1}}
\def\1{\bm{1}}

\def\vx{{\bm{x}}}
\def\vy{{\bm{y}}}

\DeclareMathAlphabet{\mathsfit}{\encodingdefault}{\sfdefault}{m}{sl}
\SetMathAlphabet{\mathsfit}{bold}{\encodingdefault}{\sfdefault}{bx}{n}

\newcommand{\R}{\mathbb{R}}